\documentclass{article} 
\usepackage{iclr2026_conference,times}
\usepackage[utf8]{inputenc}


\usepackage{amsmath,amsfonts,bm}









\def\eqref#1{equation~\ref{#1}}









\def\1{\bm{1}}










\DeclareMathAlphabet{\mathsfit}{\encodingdefault}{\sfdefault}{m}{sl}
\SetMathAlphabet{\mathsfit}{bold}{\encodingdefault}{\sfdefault}{bx}{n}











\newcommand{\R}{\mathbb{R}}



\usepackage{hyperref}
\usepackage{url}
\usepackage{booktabs}
\usepackage{multirow}
\usepackage[table]{xcolor}
\usepackage{tabularx}

\usepackage{graphicx}
\usepackage{subcaption}
\usepackage{enumitem}
\usepackage{amssymb}
\usepackage{pifont}
\newcommand{\xmark}{\ding{55}}  
\usepackage{tikz}
\usepackage{tikz-cd}
\usepackage{amsmath, amssymb, bm} 
\usepackage{mathtools}           
\usepackage{dsfont}              
\usepackage{graphicx}
\usepackage{subcaption}

\usepackage{listings}
\usepackage{xcolor} 

\definecolor{codebg}{RGB}{248,248,248}

\lstdefinestyle{paperpython}{
  language=Python,
  basicstyle=\ttfamily\footnotesize,
  breaklines=true,
  breakatwhitespace=true,
  columns=fullflexible,
  keepspaces=true,
  showstringspaces=false,
}

\usepackage{algorithm}
\usepackage{algpseudocode}

\newcommand{\enc}{\Phi}
\newcommand{\dec}{\Psi}
\newcommand{\Gspace}{\mathcal{G}}
\newcommand{\Fspace}{\mathcal{F}}

\newcommand{\kernn}{k_\text{n}}
\newcommand{\kerne}{k_\text{e}}

\usepackage{amsthm}
\theoremstyle{definition}
\newtheorem{definition}{Definition}
\newtheorem{remark}{Remark}
\newtheorem{theorem}{Theorem}[section]
\newtheorem{lemma}[theorem]{Lemma}
\newtheorem{proposition}[theorem]{Proposition}

\usepackage{booktabs}
\usepackage{algorithm}
\usepackage{algpseudocode}
\title{CORDS: Continuous Representations\\of Discrete Structures}

\author{%
  Tin Had\v{z}i Veljkovi\'{c} \\
  UvA-Bosch Delta Lab \\
  University of Amsterdam \\
  \And
  Erik Bekkers \\
  AMLab \\
  University of Amsterdam \\
    \And
  Michael Tiemann\thanks{Part of this work was carried out when Michael was still with the Bosch Center for Artificial Intelligence, Renningen, Germany.} \\
  T\"ubingen AI Center \\
  Eberhard Karls Universit\"at T\"ubingen \\
  \And
  Jan-Willem van de Meent \\
  UvA-Bosch Delta Lab \\
  University of Amsterdam \\
}

%

\iclrfinalcopy 
\begin{document}

\maketitle
\lhead{Preprint. Accepted at ICLR 2026.}  
\rhead{}
\newcommand{\CORDS}{\textsc{CORDS}}          
\newcommand{\UNCORDS}{\textsc{UnCORDS}}      
\newcommand{\cordsmap}{\mathcal{C}}          
\newcommand{\uncordsmap}{\mathcal{U}}        



\begin{abstract}
Many learning problems require predicting sets of objects when the number of objects is not known beforehand. 
Examples include object detection, molecular modeling, and scientific inference tasks such as astrophysical source detection.
Existing methods often rely on padded representations or must explicitly infer the set size, which often poses challenges.
We present a novel strategy for addressing this challenge by casting prediction of variable-sized sets as a continuous inference problem. Our approach, \CORDS{} (\emph{Continuous Representations of Discrete Structures}), provides an invertible mapping that transforms a set of spatial objects into continuous fields: a density field that encodes object locations and count, and a feature field that carries their attributes over the same support. 
Because the mapping is invertible, models operate entirely in field space while remaining exactly decodable to discrete sets. 
We evaluate \CORDS{} across molecular generation and regression, object detection, simulation-based inference, and a mathematical task involving recovery of local maxima, demonstrating robust handling of unknown set sizes with competitive accuracy.
\end{abstract}

\section{Introduction}

In problems where we wish to reason about discrete structure, we often need to reason about a set of objects without knowing the cardinality of this set in advance. 
Examples include object detection~\citep{FCOS,YOLO}, 
scientific inference tasks, such as reconstructing catalogs of astrophysical sources \citep{DeepSource,YOLOCIANNA}, 
or conditional molecular generation, where the conditioned property does not uniquely 
determine the number of atoms \citep{ProxelGen,FAME}. 
Inferring cardinality directly from data is often difficult, making sampling in conditional generation or inference tasks inefficient.

Reasoning about unknown cardinality is a challenge that has been around for a long time. Classic approaches include model selection with variational inference \citep{beal2003variational}, reversible jump MCMC \citep{richardson1997bayesian}, and Bayesian nonparametrics \citep{hjort2010bayesian}. 
In modern approaches based on deep learning, a common strategy is to \emph{pre-allocate capacity} beyond what is typically needed, and then suppress or ignore unneeded capacity \citep{DETR_prune}. 
In the sciences, similar ideas appear when combining variational inference and empirical Bayes estimation, where learning the prior can serve to prune unneeded degrees of freedom \cite{vandemeent2014empirical}. 
These examples reflect a pervasive pattern: rather than modeling the distribution over cardinalities \(p(N)\) explicitly, many methods sidestep the issue by way of user-specified truncations or paddings.

In parallel, \emph{continuous representations} have become increasingly common across domains, offering flexible ways to encode signals and structures and partly addressing the challenges of variable cardinality. Neural fields and coordinate-based models~\citep{Mildenhall2020NeRF,Sitzmann2020SIREN,neural_fields_overview} showed how images and scenes can be embedded in continuous domains, and this perspective has since been extended to molecules and proteins~\citet{voxmol,funcmol, ProxelGen}. 
These approaches remove the need to specify the number of objects in advance, since atoms or components are represented as smooth densities that can, in principle, be sampled or inpainted flexibly \citet{ProxelGen}. Yet cardinality is still only inferred indirectly, and object attributes are often added afterwards through auxiliary classifiers or peak-detection heuristics, rather than being built into the representation itself. As a result, continuous fields provide flexibility but not a unified treatment of both counts and features.


To address this gap, we introduce \CORDS{} (\emph{Continuous Representations of Discrete Structures}). 
Here, discrete objects are mapped into \emph{continuous fields}, smooth functions defined over an ambient domain (e.g.\ space, time, or grids), where a density field encodes object counts and positions, and a feature field carries their attributes. 
The total density mass then acts as a continuous, differentiable quantity that implicitly encodes the number of objects. 
The construction is bijective, so models can operate directly in field space and still recover discrete sets without auxiliary mechanisms such as fixed slots or padding. 
This provides a systematic alternative to existing workarounds, offering a single representation that applies across domains.


\begin{figure*}[t]
  \centering
  \includegraphics[width=0.65\linewidth,trim=50 50 40 42,clip]{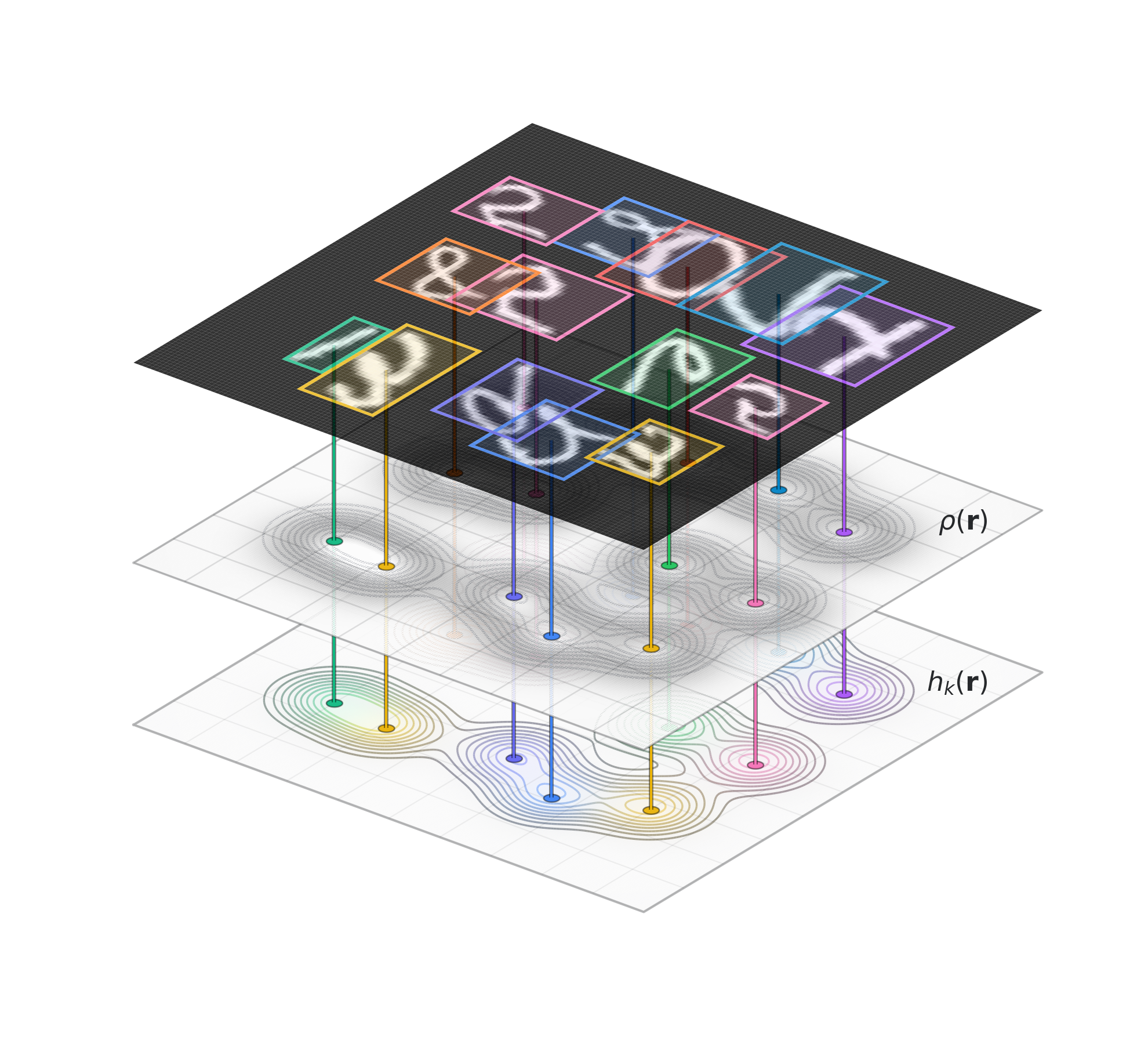}

  \vspace{-0.6ex}
\caption{An image with $N$ MNIST digits (top) is encoded with \CORDS{} into a density field $\rho(\mathbf r)$ (middle) and per-class feature fields $h_k(\mathbf r)$ (bottom). The number of objects is encoded directly in the density mass, $N = \int\rho(\mathbf r)\, d\mathbf r$.}

  \label{fig:MNIST_main}
\end{figure*}

\section{Related Work}

\paragraph{Generative modeling on graphs.}  
Graph generative modeling is typically framed in either discrete or continuous terms. Discrete methods treat molecules as graphs of nodes and edges encoded through adjacency matrices, as in DiGress~\citep{Vignac2023DiGress}, which applies discrete denoising diffusion, variational flow matching~\citep{VFM}, which casts flow matching as variational inference over categorical states, and MoFlow~\citep{moflow}, which uses invertible flows to generate atoms and bonds with exact likelihoods. Continuous methods instead generate atomic coordinates and features directly in 3D space. Examples include G-SchNet~\citep{gschnet}, which models molecular conformations with equivariant GNNs, ENF~\citep{ENF}, which extends normalizing flows with equivariant dynamics, and diffusion-based approaches such as EDM~\citep{EDM_EGNN}, EDM-Bridge~\citep{EDM-bridge}, and GeoLDM~\citep{geoldm}. More recent architectures such as Ponita~\citep{ponita} and Rapidash~\citep{Rapidash} further improve scalability with multi-scale equivariant representations.  

Alongside these approaches, several works have explored representing molecular graphs in continuous fields rather than explicit graphs. VoxMol~\citep{voxmol} represents molecules as voxelized density grids processed by convolutional networks, while Ragoza et al.~\citep{learningcontinuousrepresentation3d} introduced one of the earliest 3D molecular generative models based on atomic density fields, reconstructing molecules through peak detection and bond heuristics. FuncMol~\citep{funcmol} proposed a neural field parameterization of molecular occupancy, and ProxelGen~\citep{ProxelGen} extended this idea to conditional generation and inpainting. While these methods remove the need to fix graph size in advance, they ultimately recover atoms and features through thresholding or auxiliary classifiers, leaving cardinality and features only indirectly modeled.

\paragraph{Object detection.}
 Two-stage detectors such as Faster R-CNN~\citep{FasterRCNN} generate region proposals followed by refinement, while one-stage models like YOLO~\citep{YOLO} and RetinaNet~\citep{retina} predict boxes and classes densely, balancing speed and accuracy. EfficientDet~\citep{efficientdet} further improves this trade-off via compound scaling. Transformer-based approaches, exemplified by Deformable DETR~\citep{deformabledetr} and the real-time RT-DETR family~\citep{RE-DETRv3}, accelerate convergence and improve small-object handling. Methods that operate through heatmaps or density maps, e.g., CenterNet~\citep{centernet}, crowd counting~\citep{crowd_1}, and microscopy detection~\citep{crowd_2}, are most closely related to our setting, as they localize objects from continuous spatial representations. However, unlike CORDS, these approaches focus purely on localization and do not model or recover object-level features from the underlying fields.

\paragraph{Simulation-based inference.}
Simulation-based inference (SBI) is used in domains such as cosmology, astrophysics, and particle physics where the likelihood is intractable but simulators are available. Early approaches such as Approximate Bayesian Computation (ABC)~\citep{beaumont2002approximate} relied on handcrafted summary statistics, while modern neural methods---neural posterior estimation (NPE) and neural ratio estimation (NRE)~\citep{papamakarios2019sequentialneurallikelihoodfast,SBI_benchmark}---provide flexible and scalable inference. Recent advances include flow matching posterior estimation (FMPE)~\citep{FM_SBI}, which leverages flow matching to improve scalability and accuracy, achieving state-of-the-art results in gravitational-wave inference. Yet, as in detection, variable event cardinalities are still typically handled by padding, rather than modeled directly.

\noindent We provide a more exhaustive survey of related work in Appendix~\ref{app:extra_related_work}.

\section{CORDS: Continuous fields for variable-size sets}
\label{sec:cords-theory}

Our goal is to establish a \emph{bijective} correspondence between discrete sets 
and continuous fields.
This allows models to operate directly in the field domain, 
where learning and generation are often more convenient, while still ensuring that 
discrete predictions can be recovered exactly whenever needed. The construction 
applies uniformly across modalities: the only difference lies in the choice of the 
ambient domain \(\Omega \subseteq \mathbb{R}^d\) (e.g.\ a pixel grid for images, 
three-dimensional space for molecules, or the time axis for light curves). 

We consider a set \(S=\{(\mathbf r_i,\mathbf x_i)\}_{i=1}^N\) of objects with 
positions \(\mathbf r_i \in \Omega\) and feature vectors 
\(\mathbf x_i \in \mathbb{R}^{d_x}\).
Let \(K:\Omega \times \Omega \to \mathbb{R}_{\ge0}\) be a continuous, positive kernel 
with finite, location-independent mass 
\(\alpha = \int_\Omega K(\mathbf r;\mathbf s)\,d\mathbf r\).

\paragraph{Encoding.}
In the \CORDS{} approach, a discrete set is transformed into continuous fields by 
\emph{superimposing kernels} centered at the object positions. The resulting density 
field represents where objects are located, while the feature field aligns with it by 
distributing the object attributes over the same spatial support:
\begin{equation}
\label{eq:cords-encode}
\rho(\mathbf r) \;=\;\frac{1}{\alpha} \sum_{i=1}^N K(\mathbf r;\mathbf r_i),
\qquad
\mathbf h(\mathbf r) \;=\;\frac{1}{\alpha} \sum_{i=1}^N \mathbf x_i \, K(\mathbf r;\mathbf r_i).
\end{equation}
\medskip
Our next objective is to establish conditions for exact invertibility so that \((\rho,\mathbf h)\) determine the set uniquely.

\paragraph{Decoding.}
The inversion is made possible by three structural properties of the encoding. 
The total mass of the density determines the number of objects, since each kernel 
contributes the same constant integral \(\alpha\). The shape of the density 
identifies object locations, as it must be explained by a superposition of kernel 
translates. Finally, the feature field is aligned with the density, so projecting it 
onto the recovered kernels yields the original features. We now formalize each of 
these steps.

\begin{enumerate}[leftmargin=1.5em,itemsep=3pt,topsep=2pt]
\item \textit{Cardinality.}  
Each object is represented via a kernel whose integral is \(\alpha\), so that
\begin{equation}
  N \;=\;\int_\Omega \rho(\mathbf r)\,d\mathbf r .
\end{equation}
This makes variable cardinality straightforward: the number of objects is inferred directly from the density field.

\item \textit{Positions.}  
Once the number of objects is known, their locations are encoded in the shape of 
the density field. Because \(\rho\) is by definition a superposition of kernel 
translates, positions can be recovered by solving the kernel-matching problem
\begin{equation}
\label{eq:cords-pos}
\min_{\mathbf r_1,\dots,\mathbf r_N} 
\int_\Omega \Bigl(\rho(\mathbf r) - \frac{1}{\alpha}\sum_{i=1}^N K(\mathbf r;\mathbf r_i)\Bigr)^2 d\mathbf r .
\end{equation}
If the field truly originates from the forward transformation, the original centers 
achieve the global minimum. In practice, approximate solutions found with 
gradient-based optimization already suffice, and can be further refined if higher 
accuracy is required.

\item \textit{Features.}  
Once the positions are fixed, the final step is to recover the object features. 
Because the feature field was constructed from the same kernels as the density field, 
its support aligns with the recovered positions. For each position 
\(\mathbf r_i\) we define \(\kappa_i(\mathbf r) = K(\mathbf r;\mathbf r_i)\); 
these kernels span the subspace of the feature field, so reconstructing the features 
amounts to projecting \(\mathbf h\) onto this basis. To make this concrete, we form the Gram matrix \(G \in \mathbb{R}^{N \times N}\) with 
entries
\[
    G_{ij} = \int_\Omega \kappa_i(\mathbf r)\,\kappa_j(\mathbf r)\, d\mathbf r ,
\]
and the projection matrix \(B \in \mathbb{R}^{N \times d_x}\) with rows
\[
    B_{i:} = \int_\Omega \mathbf h(\mathbf r)\,\kappa_i(\mathbf r)\, d\mathbf r .
\]
The system \( B = \frac{1}{\alpha} G \mathbf X\) then recovers the feature matrix 
\(\mathbf X \in \mathbb{R}^{N \times d_x}\), whose rows are the feature vectors 
\(\mathbf x_i\). Under mild assumptions on the kernel, \(G\) is symmetric 
positive-definite, ensuring that this system has a unique solution
\begin{equation}
    \mathbf X = \alpha G^{-1} B .
\end{equation}
This solution exactly matches the features that generated the field during encoding. 

\end{enumerate}

With this construction, we obtain a bijection between finite sets and their corresponding fields. The conditions that guarantee exact recovery, together with the formal results and proofs, are detailed in Appendix~\ref{app:CORDS_general}. In contrast, Appendix~\ref{app:psi-practical} focuses on how we approximate decoding in practice.

\subsection{Practical considerations.}
\label{sec:properties}

\paragraph{Sampling strategies.}
Fields are defined on a continuous domain \(\Omega\), but training requires a finite representation. We therefore sample a set of locations \(\{\mathbf r_i\}_{i=1}^M\), evaluate the fields \((\rho(\mathbf r_i), \mathbf h(\mathbf r_i))\) at those points, and feed the resulting tuples directly into neural networks. This differs from neural fields in the usual sense, where signals are encoded implicitly inside a network; here, we work explicitly with sampled field values.  

Two sampling approaches are possible: uniform sampling or importance sampling (Figure~\ref{fig:sampling_strategies}). For molecules in 3D, uniform grids are inefficient: the signal occupies only a small region of space, resolution grows cubically with grid size, and fixed boxes impose artificial boundaries. Instead, we adopt \emph{importance sampling}, drawing locations \(\mathbf r_i\) with probability proportional to the density field and then evaluating the fields at those points. This concentrates samples where information is present, avoids the need for bounding boxes, and allows the model to learn coordinates and fields jointly for arbitrarily sized molecules.  

For regular domains such as images or time series, however, uniform sampling remains natural: pixels form a 2D grid in images, and evenly spaced points define the 1D domain of a signal. In all cases, the model ultimately receives the sampled tuples \(\{(\mathbf r_i, \rho_i, \mathbf h_i)\}_{i=1}^M\).

\begin{figure*}[t]
  \centering
  \includegraphics[width=\textwidth,
    trim=20 0 20 0,clip]{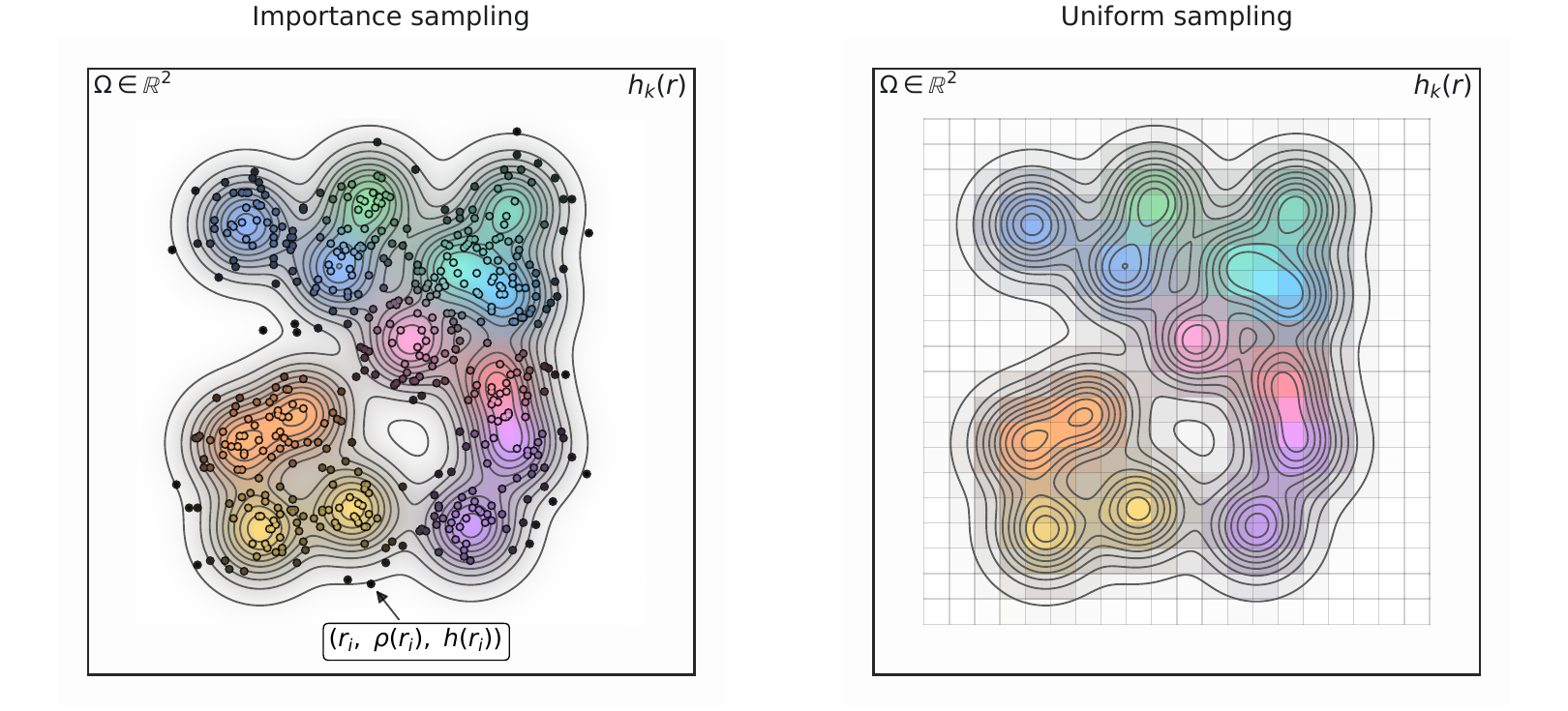}
\caption{Sampling strategies for evaluating fields. 
\textbf{Left:} Importance sampling draws coordinates in proportion to the density $\rho$, concentrating samples where signal is present. 
\textbf{Right:} Uniform sampling evaluates fields on a fixed grid, covering the domain evenly. 
In both panels, the curves are isocontours of $\rho(\mathbf{r})$, and the colors show the values of the feature fields $\mathbf{h}_k(\mathbf{r})$.}

  \label{fig:sampling_strategies}
\end{figure*}

\paragraph{Neural architectures for field processing.}
We align the choice of neural architecture with the sampling scheme. For molecules, we discretize fields using importance sampling, which produces large unordered point sets. In this setting, we use the Erwin architecture~\cite{erwin}, a hierarchical, permutation-invariant transformer that scales to thousands of points while preserving global context.
For images and time series, inputs lie on regular grids; we use standard 2D and 1D CNNs that exploit locality and run efficiently on uniform samples. This division lets \CORDS{} tackle irregular 3D geometry with a point-based transformer, and lean on compact CNNs where grid structure is natural.

\section{Experiments}

We apply \CORDS{} in four settings where variable cardinality naturally arises: molecular generation (QM9 and GeomDrugs), object detection in images (MultiMNIST with out-of-distribution counts), simulation-based inference in astronomy (burst decomposition of light curves), and a synthetic benchmark for local maxima. These tasks span pixel grids for images, three-dimensional space for molecules, time series for light curves, and abstract continuous domains for mathematical functions, all within the same field-based representation. Results for QM9 property regression and for the local-maxima benchmark are deferred to Appendices~\ref{app:add-qm9-regr} and \ref{app:add_local_maxima}, respectively; additional illustrations and domain-specific details appear in Appendix~\ref{app:implementation_details}. In all experiments, we encode objects with a Gaussian kernel
\[
K(\mathbf r;\mathbf r_i) \;=\;
\exp\!\left(-\frac{\|\mathbf r - \mathbf r_i\|^2}{2\sigma^2}\right).
\]

\subsection{Molecular tasks}
\label{sec:exp-molecules}

\paragraph{Datasets.}
Two benchmarks are considered. 
QM9~\citep{ramakrishnan2014quantum} contains small organic molecules (up to $N{=}29$ heavy atoms) with DFT-computed molecular properties; it is used for both regression and generation tasks. GeomDrugs comprises larger, drug-like molecules covering a broader chemical space and larger atom counts; we use it for unconditional generation to assess scalability and robustness of the proposed framework at higher cardinalities and a setting where modeling additional features, such as charges, is crucial. 

\paragraph{Converting molecules to fields, sampling, and backbone.}
Atoms, described by their coordinates and type/charge features, are mapped to density and feature fields $\rho(\mathbf{r})$ and $\mathbf{h}(\mathbf{r})$ using Eq.~\eqref{eq:cords-encode}. At each training iteration, we discretize these fields by sampling $M$ spatial locations, yielding an unordered set of sampled points
\[
\bigl\{\,\bigl(\mathbf r_i,\ \rho_{\mathrm n}(\mathbf r_i),\ \mathbf h_{\mathrm n}(\mathbf r_i)\bigr)\,\bigr\}_{i=1}^{M},
\]
which forms the input to the model (Erwin for all tasks). All learning and sampling steps are performed purely in the field domain, with discrete graphs recovered via decoding only for evaluation metrics that explicitly require molecular graphs. Further details on sampling policies, normalization strategies, hyperparameters, and training schedules are provided in Appendix~\ref{sec:exp-molecules-edm}.

\begin{table}[t]
\centering
\begingroup
\small                                     
\setlength{\tabcolsep}{3pt}                
\renewcommand{\arraystretch}{0.98}         

\newcolumntype{C}{>{\centering\arraybackslash}X}

\begin{tabularx}{\linewidth}{@{}l *{4}{C} *{2}{C} @{}}
\toprule
\textbf{Model} & \multicolumn{4}{c}{\textbf{QM9}} & \multicolumn{2}{c}{\textbf{GeomDrugs}} \\
\cmidrule(lr){2-5}\cmidrule(lr){6-7}
& Atom (\%) & Mol (\%) & Valid (\%) & Unique (\%) & Atom (\%) & Valid (\%) \\
\midrule
ENF          & 85.0 &  4.9 & 40.2 & 98.0 &  --  &  --  \\
G\!-Schnet   & 95.7 & 68.1 & 85.5 & 93.9 &  --  &  --  \\
GDM          & 97.0 & 63.2 &  --  &  --  & 75.0 & 90.8 \\
GDM\!-AUG    & 97.6 & 71.6 & 90.4 & 99.0 & 77.7 & 91.8 \\
EDM          & 98.7 & 82.0 & 91.9 & 98.7 & 81.3 & 92.6 \\
EDM\!-Bridge & 98.8 & 84.6 & 92.0 & 98.6 & 82.4 & 92.8 \\
GLDM         & 97.2 & 70.5 & 83.6 & 98.9 & 76.2 & 97.2 \\
GLDM\!-AUG   & 97.9 & 78.7 & 90.5 & 98.9 & 79.6 & 98.0 \\
GeoLDM       & 98.9 & 89.4 & 93.8 & 98.8 & 84.4 & 99.3 \\
PONITA       & 98.9 & 87.8 &  --  &  --  &  --  &  --  \\
Rapidash     & 99.4 & 92.9 & 98.1 & 97.2 &  --  &  --  \\
\midrule
\textbf{CORDS}  & 97.9 & 82.3 & 91.0 & 97.1 & 78.4 & 94.6 \\
\bottomrule
\end{tabularx}
\endgroup
\vspace{-0.6ex} 
\caption{QM9 and GeomDrugs unconditional generation results, evaluated by the standard RDKit evaluation. Higher is better. Baseline results are adapted from ~\citep{geoldm, Rapidash}.}
\label{tab:qm9-geom-combined}
\end{table}

\begin{figure}[t]
  \centering
  \begin{minipage}[t]{0.50\linewidth}\vspace{0pt}%
    \centering
    \includegraphics[width=\linewidth]{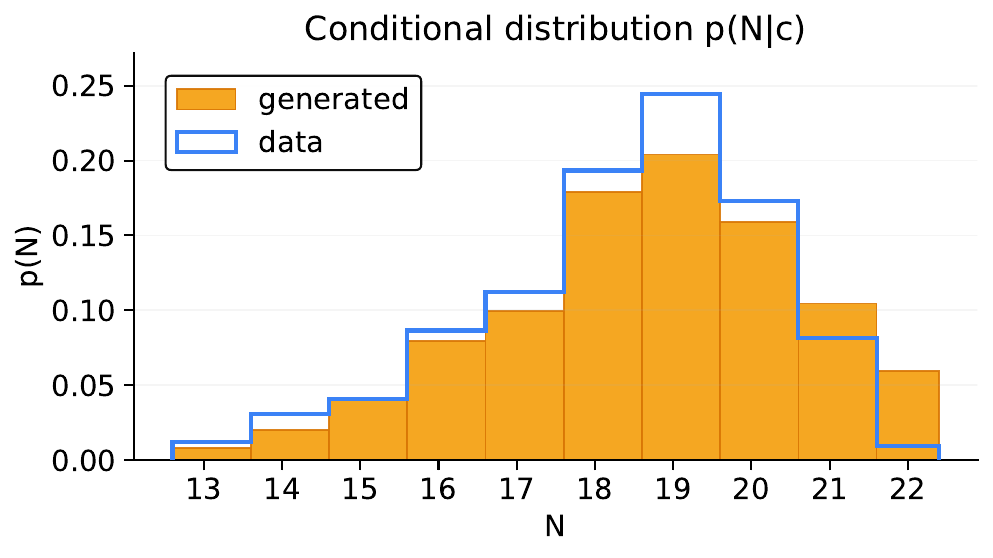}
    \phantomsubcaption\label{fig:pn_hist_a} 
  \end{minipage}\hfill
  \begin{minipage}[t]{0.40\linewidth}\vspace{12.5pt}%
    \centering
    {\small
    \setlength{\tabcolsep}{3pt}\renewcommand{\arraystretch}{0.98}
    \begin{tabular}{lcccc}
      \toprule
      \textbf{Model} & Atom & Mol & Valid & Unique \\
      \midrule
      data                 & 99.8 & 98.7 & 98.9  & 99.9 \\
      \midrule
      VoxMol               & 99.2 & 89.3 & 98.7  & 92.1 \\
      FuncMol$_\text{dec}$ & 99.2 & 88.6 & 100.0 & 81.1 \\
      FuncMol              & 99.0 & 89.2 & 100.0 & 92.8 \\
      \midrule
      \textbf{CORDS} & 99.2 & 93.8 & 98.7 & 97.1 \\
      \bottomrule
    \end{tabular}}
    \phantomsubcaption\label{fig:gen_results_babel}
  \end{minipage}
\caption{\textbf{Left}: Conditional generation on QM9. Histogram of the predicted atom count 
distribution $p(N | c)$ when conditioning on property ranges unseen during training. 
\textbf{Right}: Unconditional generation results on QM9, evaluated using OpenBabel postprocessing, following VoxMol. Baseline results are adapted from ~\citep{funcmol}.}

  \label{fig:pn_plus_results_centered}
\end{figure}

\textbf{Unconditional generation.}
In the generative setting, both fields and the sampling locations must be modeled explicitly. To achieve this, CORDS learns a joint distribution over coordinates and field values, denoising the entire set $\{(\mathbf r_i,\rho_i,\mathbf h_i)\}_{i=1}^M$. After generation, fields are decoded to molecular graphs using the decoding procedure from the \CORDS{} section: the number of nodes $N$ is estimated from the density mass, node positions are recovered by kernel center fitting, and features are reconstructed via linear projection. 

We compare against two distinct groups of baselines, presented separately due to differences in evaluation procedures. Table in Figure~\ref{fig:pn_plus_results_centered} compares CORDS to approaches that operate directly on continuous or voxelized representations, such as VoxMol~\citep{voxmol} and FuncMol~\citep{funcmol}, following their standard sanitization and post-processing evaluation steps. Table~\ref{tab:qm9-geom-combined} provides results according to the standard evaluation criteria common in recent discrete and continuous generative modeling literature (e.g., EDM; \citet{EDM_EGNN}, Rapidash; \citet{Rapidash}), where validity, uniqueness, atom-level stability, and molecule-level stability metrics are reported.

Finally, we evaluate generalization to larger molecules on GeomDrugs. Here, modeling \emph{non-categorical} atom features, specifically charges, is essential: omitting them harms standard metrics such as validity and atom stability. A strength of \CORDS{} is that continuous features are represented directly in field space and decoded back to graphs, which enables evaluation without post-processing. In contrast, prior continuous approaches (e.g., VoxMol, FuncMol) typically operate with one-hot atom types and resort to heuristics for charges, which limits comparability on this benchmark; we therefore follow the usual GeomDrugs evaluation protocol. The generative setup mirrors QM9: training and sampling are done entirely in field space, with decoding only for evaluation. 

On QM9, \CORDS{} matches or improves upon continuous/voxel baselines (VoxMol, FuncMol) and reaches the overall performance range of E(3)-equivariant GNNs (EDM, GeoLDM, Ponita), despite using a non-equivariant, domain-agnostic backbone. This is the sense in which we describe the results as competitive.

\textbf{Conditional generation (QM9).}
Most conditional generators are trained by conditioning on target property values and are then evaluated by predicting the properties of generated samples with independent predictors, reporting MAE against the targets~\cite{EDM_EGNN}.
When conditioning on a property \(c\), one must also model the conditional size distribution \(p(N | c)\). Prior work typically discretizes \(c\) into bins and treats \((N, c)\) as a joint categorical variable over the number of nodes and the bin of the conditioning variable. This creates a support gap: if a bin is unseen during training, sampling \(N\) at that \(c\) becomes impossible.

Our approach conditions directly on continuous properties (here, polarizability \(\alpha\)) without discretizing either \(c\) or \(N\). Cardinality emerges from field mass, so \(p(N | c)\) is learned as part of the conditional field distribution. To test generalization, we remove a range of \(c\) during training and condition on that range at inference. Despite the holdout, we recover coherent conditional distribution over the number of atoms, as reflected in the induced atom-count histograms in Fig.~\ref{fig:pn_hist_a}.

\begin{table}[t]
\centering
\begingroup
\small
\setlength{\tabcolsep}{2.5pt}
\renewcommand{\arraystretch}{0.98}
\newcolumntype{C}{>{\centering\arraybackslash}X}

\begin{tabularx}{\linewidth}{@{}l *{9}{C} @{}}
\toprule
\multirow{2}{*}{\textbf{Model}} &
\multicolumn{3}{c}{\textbf{AP}} &
\multicolumn{3}{c}{\textbf{AP50}} &
\multicolumn{3}{c}{\textbf{AP75}} \\
\cmidrule(lr){2-4}\cmidrule(lr){5-7}\cmidrule(lr){8-10}
& In-dist & OOD & Drop (\%) & In-dist & OOD & Drop (\%) & In-dist & OOD & Drop (\%) \\
\midrule
DETR   & 81.2 & 65.4 & 19.5 & 84.0 & 71.7 & 14.6 & 74.2 & 55.1 & 25.8 \\
YOLO   & 71.9 & 54.3 & 24.5 & 78.8 & 64.2 & 18.5 & 59.9 & 43.1 & 28.0 \\
\midrule
\textbf{CORDS} & 76.8 & 64.2 & 16.4 & 81.5 & 71.8 & 11.9 & 68.0 & 53.7 & 21.0 \\
\bottomrule
\end{tabularx}
\endgroup
\vspace{-0.6ex}
\caption{\textsc{MultiMNIST} object detection results in-distribution vs.\ OOD. Drop (\%) is relative performance decrease.}
\label{tab:multimnist-od}
\end{table}

\subsection{Object detection (\textsc{MultiMNIST})}
\label{sec:exp-multimnist-detailed}

\paragraph{Setup.}
We demonstrate \CORDS{} on images where the discrete objects of interest are bounding boxes. 
Each bounding box instance is specified by its centre \((x,y)\in\mathbb{R}^2\) and carries two types of features: 
a class label (\(0\)--\(9\)) and a box shape \((w,h)\). We encode such sets into aligned fields on the 
image plane: a density field \(\rho(\mathbf r)\), per-class channels that carry one-hot information, 
and two size channels that store \((w,h)\) where mass is present (Fig.~\ref{fig:MNIST_main}). 
Discrete predictions are recovered with the decoding equations from Section~\ref{sec:cords-theory}.
Data is generated on the fly using an online \textsc{MultiMNIST} generator, avoiding any additional augmentations and effectively yielding an infinite-data regime.
Each image contains up to \(N_{\max}\) digits (here \(N_{\max}{=}15\)); digits are uniformly sampled per image, 
randomly rotated and rescaled, and placed on a black canvas. Ground-truth bounding boxes and classes are the targets for all experiments. 

\paragraph{OOD evaluation.}
Beyond standard in-distribution evaluation (images with at most \(N_{\max}\) objects), we also construct an out-of-distribution split 
where the number of digits exceeds the training range. This tests whether a detector can handle variable cardinality without relying on 
pre-set capacity or a dedicated counting head. Query-based models implicitly cap predictions via their slot budget; once the scene contains 
more objects than slots, additional instances are necessarily missed. In CORDS, the cardinality is encoded by density mass, so increasing 
scene density is naturally reflected in the representation, and the decoding remains valid for larger scenes without changing the network.

\paragraph{Training objective.}
We train by minimizing a pixel-wise mean squared error (MSE) on both the density and feature fields, combined with a penalty on mismatched counts. The overall loss is
\[
\mathcal{L} = \mathcal{L}_{\text{MSE}} + \lambda \, (\hat N - N)^2 ,
\]
where the predicted count $\hat N$ is given by the total mass of the density field,
\[
\hat N = \int \rho(x,y)\,dx\,dy .
\]
This way, the number of objects is treated as a continuous, differentiable quantity and optimized jointly with the other objectives.

\paragraph{Baselines and metrics.}
We compare to a DETR detector with a fixed query budget and to a compact anchor-free YOLO variant. 
All methods are competitive in-distribution. Under OOD counts, the fixed-query baseline underestimates due to capacity limits, while \CORDS{} continues to track the true cardinality more accurately. We report the standard detection metrics in Table~\ref{tab:multimnist-od}. For fair comparison, we allocated all networks with a total of 8 million parameters. The exact implementation details of the baselines are discussed in Appendix \ref{sec:exp-multimnist-detailed-appendix}.

\begin{figure}[t]
\centering
\begin{subfigure}[b]{0.63\linewidth}
    \includegraphics[width=\linewidth]{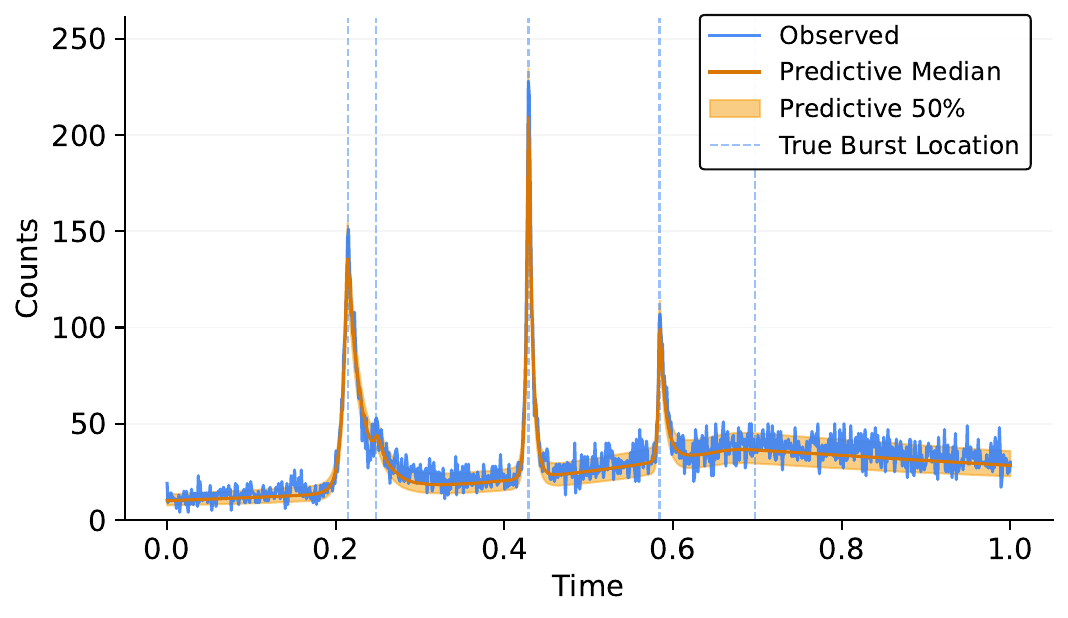}
    \caption{Posterior reconstructions of light curve}
    \label{fig:sbi_lightcurve}
\end{subfigure}\hfill
\begin{subfigure}[b]{0.36\linewidth}
    \raisebox{0.23em}{%
        \includegraphics[width=\linewidth]{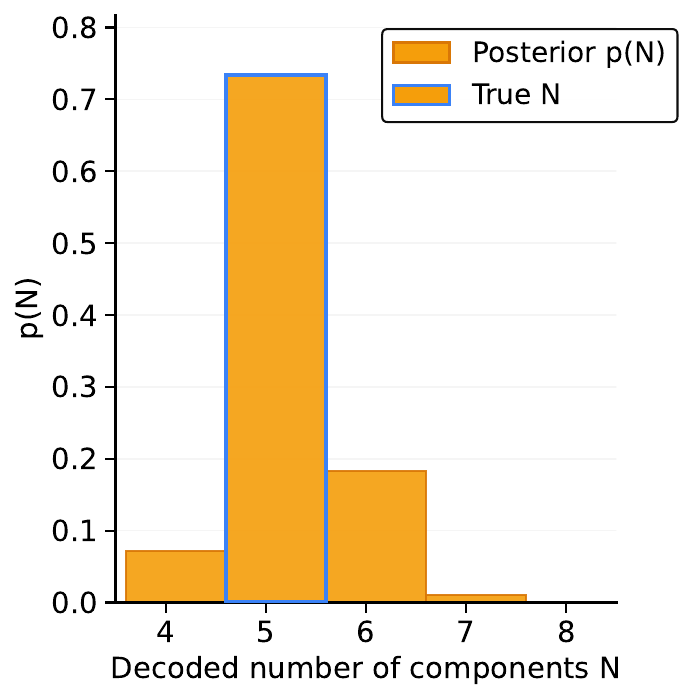}
    }
    \caption{Posterior over component count}
    \label{fig:sbi_count}
\end{subfigure}
\vspace{-0.5em}
\caption{Simulation-based inference on light curves. \textbf{(a)} Observed light curve $\ell$ (blue) with reconstructions from posterior samples $\theta \sim p(\theta \mid \ell)$. Each reconstruction is obtained by decoding sampled fields into component parameters $\theta$ and simulating the resulting light curve. \textbf{(b)} Posterior over the number of components $p(N \mid \ell)$.}

\label{fig:sbi_results}
\end{figure}
\subsection{Simulation-based inference for FRBs}
Simulation-based inference (SBI) deals with settings where the likelihood is unavailable but simulation from the generative process is possible:
we draw parameters from a prior, generate observations $\ell$, and train a conditional model to approximate the posterior $p(\theta| \ell)$. We adopt flow matching for posterior estimation~\cite{FM_SBI}, which learns a time-dependent vector field that transports a simple base distribution to the target posterior, giving us amortized inference with a tractable density.

We demonstrate this approach on the problem of modeling Fast Radio Bursts (FRBs). FRBs are short, millisecond-scale flashes of radio emission of extragalactic origin, whose astrophysical mechanisms are still not fully understood~\citet{FRB_1,FRB_2}. They are typically modeled as a superposition of a variable number of transient components, each characterized by parameters such as onset time, amplitude, rise time, and skewness. Recovering the posterior over these parameters given noisy photon-count light curves is a natural setting for SBI.

In our experiment we work with 1D photon-count light curves with Poisson noise. We first sample the number of burst components $N$ from a uniform prior. Given $N$, each component has parameters $\theta = (t_0,\ A,\ \tau_{\text{rise}},\ \text{skew})$ drawn from astrophysical priors. If $N$ were fixed, this problem would be straightforward: for example, we could represent each component as a token of dimension $\mathbb{R}^4$ and train a transformer with flow matching conditioned on $\ell$. The challenge is that $N$ varies, so we also need to recover $p(N \mid \ell)$, which is not trivial. This is the motivation for our approach.

\paragraph{Training and inference.}
We map bursts into continuous fields on the time axis: a density $\rho(t)$ with peaks at onset times and a feature field $\mathbf{h}(t)$ carrying $(A,\ \tau_{\text{rise}},\ \text{skew})$ on the same support. Each light curve is discretized on a uniform grid of $K$ points ($K{=}1000$), where we evaluate $(\rho,\mathbf{h})$ and append the observed counts $\ell(t)$. We then train a flow-matching model to approximate $p(\rho(t),\mathbf{h}(t) \mid \ell)$ directly in field space; at inference we sample fields and decode them into component sets. Appendix~\ref{app:add_SBI} details the encoding and decoding. Figure~\ref{fig:sbi_results} shows posterior light-curve reconstructions and the induced distribution $p(N \mid \ell)$, with additional details presented in Appendix~\ref{app:add_SBI}.

\section{Discussion}\label{sec:discussion}

\paragraph{Implications and takeaways.}
Our experiments show that field-based learning handles variable cardinality reliably across domains. In molecular \emph{generation}, \CORDS{} attains competitive results against well-established GNN baselines on QM9 and GeomDrugs, while outperforming prior continuous approaches such as VoxMol on QM9. A key advantage is that non-categorical atomic features, such as partial charges, can be modeled directly in the feature fields and decoded back. In \emph{object detection}, \CORDS{} exhibits a smaller performance drop under out-of-distribution object counts, plausibly because cardinality is encoded as density mass and can be regularized with a simple count penalty, making the representation more stable as scenes become denser. 
For \emph{simulation-based inference} on light curves, the approach sidesteps explicit modeling of \(p(N | \ell)\): the posterior over cardinalities arises naturally from the learned field distribution, simplifying training and inference. 

\paragraph{Limitations.}
\CORDS{} incurs practical costs. High-fidelity reconstruction in molecules benefits from dense sampling (\(\sim10^3\) points per molecule), which makes direct scaling to larger graphs computationally expensive. Accuracy also depends on precise kernel-center localization; refinements (e.g., L-BFGS) help but add latency, creating a speed--accuracy trade-off. In detection, overlapping kernels can hinder separation of nearby objects, requiring fine-tuning kernel widths. We further discuss and explain these limitations in Appendix \ref{app:runtime-molecules}.

\paragraph{Future work.}
Several directions follow naturally. 
On the \emph{detection} side, evaluating \CORDS{} on larger-scale benchmarks (e.g., COCO) would test robustness under heavy occlusion, class diversity, and crowding, and could explore learned, spatially adaptive kernels to separate nearby instances. 
For \emph{molecular} modeling, extending conditional generation to richer tasks, such as pocket-conditioned ligand design, regional inpainting, or multi-property control would further probe the benefits of working in field space with continuous attributes (charges, spins, partial occupancies). 

\section{Conclusion}
We introduced \CORDS{}, a framework for modeling variable-size sets through continuous fields, where positions and features are mapped onto density and feature fields and cardinality is recovered directly from total density mass. The appeal lies in its simplicity: a single field-based representation captures counts, locations, and attributes, enabling learning entirely in field space while still allowing exact recovery of discrete predictions across molecules, images, and simulated astronomy data.m

Compared to earlier continuous-representation methods in computer vision and molecular modeling, \CORDS{} achieves competitive performance while offering the flexibility to represent arbitrary features beyond predefined types. Overall, the results point to a single field-based representation as an elegant and broadly applicable alternative: the same encoding (density and feature fields), decoding (mass for counts, kernel centers for positions, projections for attributes), and training objectives carry across tasks, providing a unified solution to modeling variable cardinality.


\bibliography{references}
\bibliographystyle{iclr2026_conference}

\newpage
\appendix
\appendix
\providecommand{\CORDS}{\textsc{Cords}}
\providecommand{\enc}{\Phi}
\providecommand{\dec}{\Psi}
\newcommand{\Sspace}{\mathcal S_\Omega}
\providecommand{\Fspace}{\mathcal F}
\newcommand{\Frep}{\mathcal F_{\mathrm{rep}}}

\section{\CORDS{} Theoretical Framework}
\label{app:CORDS_general}

We collect precise assumptions, definitions, and proofs validating the
decoding steps and the duality between finite sets and continuous fields.
To aid readability, each formal statement is followed by a brief intuition.

\medskip
\noindent\textbf{Roadmap (informal).}
A finite set of elements with positions and features is encoded as a pair
of continuous fields by superimposing a fixed kernel at each position:
a scalar density \(\rho\) and a feature field \(\mathbf h\) that uses the
\emph{same} kernels. Decoding proceeds in three deterministic steps:
(i) read off the cardinality from the total mass of \(\rho\),
(ii) recover the positions by matching \(\rho\) with a sum of kernel translates,
and (iii) recover the features by projecting \(\mathbf h\) onto the recovered
kernel translates, which yields a small linear system with Gram matrix \(G\).
Uniqueness comes from standard identifiability of equal-weight kernel mixtures
and positive-definiteness of \(G\).

\subsection{Spaces, assumptions, and notation}

\paragraph{Ambient domain.}
Fix a bounded or \(\sigma\)-finite domain \(\Omega\subseteq\mathbb{R}^d\)
equipped with Lebesgue measure.

\paragraph{Kernel.}
Let \(K:\Omega\times\Omega\to\mathbb{R}_{\ge 0}\) be continuous with
finite, center-independent mass
\[
  \alpha \;=\; \int_\Omega K(\mathbf r;\mathbf s)\,d\mathbf r \in (0,\infty)
  \quad\text{for all } \mathbf s\in\Omega.
\]
We assume:

\begin{itemize}[leftmargin=1.25em,itemsep=2pt,topsep=2pt]
\item[(A1)] \emph{Integrable Lipschitzness in the center.}
There exists \(L_K<\infty\) such that
\[
  \bigl\| K(\cdot;\mathbf s) - K(\cdot;\mathbf t)\bigr\|_{L^1(\Omega)}
  \;\le\; L_K\,\|\mathbf s-\mathbf t\|_2\quad\forall\,\mathbf s,\mathbf t\in\Omega.
\]
This holds, e.g., for translation-invariant \(K(\mathbf r;\mathbf s)=k(\mathbf r-\mathbf s)\)
with \(k\in W^{1,1}\).

\item[(A2)] \emph{Linear independence of translates.}
For any distinct centers \(\mathbf r_1,\dots,\mathbf r_N\), the functions
\(\kappa_i(\cdot)\coloneqq K(\cdot;\mathbf r_i)\) are linearly independent in
\(L^2(\Omega)\). A sufficient condition is translation invariance with a
nonvanishing Fourier transform of \(k\).

\item[(A3)] \emph{Identifiability of equal-weight mixtures.}
If for some \(N\) and two center sets \(\{\mathbf r_i\}_{i=1}^N\) and
\(\{\mathbf s_i\}_{i=1}^N\) we have
\(\sum_i K(\cdot;\mathbf r_i)=\sum_i K(\cdot;\mathbf s_i)\),
then the two sets coincide up to permutation. This holds for many common
kernels (e.g., Gaussians) and guarantees uniqueness of positions in the
density decomposition.
\end{itemize}

\paragraph{Intuition.}
(A1) says shifting a kernel a little changes it only a little in \(L^1\),
which we use for continuity/stability arguments. (A2) ensures the kernel
translates do not accidentally cancel each other. (A3) is the standard
uniqueness of equal-weight kernel superpositions.

\paragraph{Set space.}
Let \(\Sspace\) denote the collection of finite sets of distinct positions with
features,
\[
  \Sspace \;=\; \bigcup_{N\ge 0}
  \left\{\,\{(\mathbf r_i,\mathbf x_i)\}_{i=1}^N
  : \mathbf r_i\in\Omega\ \text{pairwise distinct},\ \mathbf x_i\in\mathbb{R}^{d_x}\right\}.
\]

\paragraph{Field space.}
We take \(\Fspace = L^1(\Omega;\mathbb{R})\times L^1(\Omega;\mathbb{R}^{d_x})\),
and write \((\rho,\mathbf h)\in\Fspace\). The \emph{representable subspace} is
\[
  \Frep \;=\;
  \left\{\,\Bigl(\frac{1}{\alpha}\sum_i K(\cdot;\mathbf r_i),
                   \frac{1}{\alpha}\sum_i \mathbf x_i\,K(\cdot;\mathbf r_i)\Bigr)
           : \{(\mathbf r_i,\mathbf x_i)\}\in\Sspace \right\}.
\]

\subsection{Encoding and decoding}
\label{app:operators}

\paragraph{Encoding \(\enc\).}
For \(S=\{(\mathbf r_i,\mathbf x_i)\}_{i=1}^N\in\Sspace\), define
\[
  \enc(S) \;=\; (\rho,\mathbf h)
  \quad\text{as in Eq.\,\eqref{eq:cords-encode}},
\]
i.e.,
\(
\rho(\mathbf r) = \tfrac{1}{\alpha}\sum_i K(\mathbf r;\mathbf r_i),\;
\mathbf h(\mathbf r) = \tfrac{1}{\alpha}\sum_i \mathbf x_i K(\mathbf r;\mathbf r_i).
\)

\paragraph{Decoding \(\dec\).}
For fields \(f=(\rho,\mathbf h)\in\Fspace\), define \(\dec(f)\) in three steps:
\begin{enumerate}[leftmargin=1.5em,itemsep=2pt,topsep=2pt]
\item \(N_f \coloneqq \int_\Omega \rho(\mathbf r)\,d\mathbf r\).
(When \(f\in\Frep\), \(N_f\in\mathbb N\).)

\item Recover centers by
\[
  \{\mathbf r_i^\star\}_{i=1}^{N_f}
  \in \arg\min_{\mathbf r_1,\dots,\mathbf r_{N_f}}
  \int_\Omega\Bigl(\rho(\mathbf r)-\tfrac{1}{\alpha}\sum_{i=1}^{N_f}
  K(\mathbf r;\mathbf r_i)\Bigr)^2\,d\mathbf r .
\]
Under (A3) the minimizer is unique up to permutation when \(f\in\Frep\).

\item With \(\kappa_i^\star(\mathbf r)=K(\mathbf r;\mathbf r_i^\star)\), set
\[
  G_{ij}^\star=\int_\Omega \kappa_i^\star(\mathbf r)\,\kappa_j^\star(\mathbf r)\,d\mathbf r,
  \qquad
  B_{i:}^\star=\int_\Omega \mathbf h(\mathbf r)\,\kappa_i^\star(\mathbf r)\,d\mathbf r,
\]
and define features
\(
  \mathbf X^\star = \alpha\,(G^\star)^{-1} B^\star.
\)
Finally, \(\dec(f)\coloneqq\{(\mathbf r_i^\star,\mathbf x_i^\star)\}_{i=1}^{N_f}\)
with rows \(\mathbf x_i^\star\) of \(\mathbf X^\star\).
\end{enumerate}

\paragraph{Intuition.}
Step 1 reads off the count because each kernel contributes mass \(\alpha\).
Step 2 seeks the unique set of centers whose kernel sum reproduces \(\rho\).
Step 3 says: once the centers are known, \(\mathbf h\) is a linear combination
of the same kernels with vector coefficients---the original features---so a small,
well-conditioned linear system retrieves them.

\subsection{Decoding guarantees}
\label{app:decoding_proofs}

Throughout we assume (A1)--(A3) and distinct positions.

\subsubsection{Recovering the number of elements}

\begin{proposition}[Cardinality]
\label{prop:count}
Let \(S\in\Sspace\) and \(\enc(S)=(\rho,\mathbf h)\).
Then \(\displaystyle \int_\Omega \rho(\mathbf r)\,d\mathbf r = |S|.\)
\end{proposition}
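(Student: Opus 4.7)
The plan is to unfold the definition of the encoding and use the mass normalization of the kernel. Specifically, I would start from the identity $\rho(\mathbf r) = \tfrac{1}{\alpha}\sum_{i=1}^{N} K(\mathbf r;\mathbf r_i)$ given in the definition of $\enc$, with $N=|S|$, and then compute $\int_\Omega \rho(\mathbf r)\,d\mathbf r$ directly by interchanging the (finite) sum with the integral.

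The key steps, in order, are: (i) substitute the encoding formula for $\rho$ into the integral; (ii) pull the factor $1/\alpha$ out; (iii) interchange the finite sum and the integral—this is unconditionally justified by linearity of the Lebesgue integral on a finite sum of nonnegative integrable functions, and also follows from Tonelli's theorem since $K\ge 0$; (iv) apply the standing assumption that $\int_\Omega K(\mathbf r;\mathbf s)\,d\mathbf r=\alpha$ for every $\mathbf s\in\Omega$, so each of the $N$ terms contributes exactly $\alpha$; (v) conclude $\int_\Omega \rho\,d\mathbf r = \tfrac{1}{\alpha}\cdot N\alpha = N = |S|$.

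There is essentially no obstacle: the only hypothesis I use is the center-independent mass $\alpha$ of the kernel, which is part of the setup in Section~\ref{app:CORDS_general}; the Lipschitz assumption (A1), the linear independence (A2), and the identifiability (A3) are not needed for this particular statement—they are reserved for recovery of positions and features. The one point I would flag in passing is that distinctness of the $\mathbf r_i$ plays no role here either: the formula $\int \rho = N$ holds for any multiset of centers, which is reassuring because it shows that the count read-off is robust even before identifiability is invoked.
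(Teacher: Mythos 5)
Your proposal is correct and matches the paper's proof exactly: substitute the encoding formula, interchange the finite sum with the integral by linearity, and use the center-independent kernel mass \(\alpha\) to conclude \(\int_\Omega\rho = N\). Your side remarks that (A1)--(A3) and distinctness of centers are unnecessary here are accurate and a nice observation, but the core argument is the same.
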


\begin{proof}
By linearity of the integral and the definition of \(\alpha\),
\[
\int_\Omega \rho(\mathbf r)\,d\mathbf r
  = \frac{1}{\alpha}\sum_{i=1}^N\int_\Omega K(\mathbf r;\mathbf r_i)\,d\mathbf r
  = \frac{1}{\alpha}\sum_{i=1}^N \alpha
  = N.
\]
\end{proof}

\paragraph{Intuition.}
Every element deposits one kernel "bump'' whose total mass is \(\alpha\).
Adding them and dividing by \(\alpha\) yields the count.

\subsubsection{Positive--definiteness of the Gram matrix}

\begin{lemma}[Gram matrix is SPD]
\label{lem:spd}
For distinct centers \(\{\mathbf r_i\}_{i=1}^N\) and
\(\kappa_i(\cdot)=K(\cdot;\mathbf r_i)\), the matrix
\(
  G_{ij}=\int_\Omega \kappa_i(\mathbf r)\,\kappa_j(\mathbf r)\,d\mathbf r
\)
is symmetric positive--definite.
\end{lemma}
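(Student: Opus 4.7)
The plan is to verify the three defining properties in the natural order—symmetry, positive semi-definiteness, and strict definiteness—by recognizing $G$ as the Gram matrix of the translates $\kappa_1,\dots,\kappa_N$ in $L^2(\Omega)$ and invoking assumption (A2) at the last step. Before starting I would record the one-line observation that each $\kappa_i$ lies in $L^2(\Omega)$: continuity of $K$ together with the finite-mass condition $\int K(\cdot;\mathbf r_i) = \alpha < \infty$ makes $\kappa_i \in L^1 \cap L^\infty \subseteq L^2$ in the standard settings covered by (A1) (e.g., bounded $\Omega$, or translation-invariant kernels with sufficient decay such as the Gaussian used in the experiments).

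Symmetry is immediate: swapping $i$ and $j$ in the integrand defining $G_{ij}$ leaves it unchanged. For positive semi-definiteness, I would fix any $v\in\mathbb R^N$ and carry out the textbook Gram-matrix computation
\[
v^\top G v \;=\; \sum_{i,j} v_i v_j \int_\Omega \kappa_i(\mathbf r)\,\kappa_j(\mathbf r)\,d\mathbf r \;=\; \int_\Omega \Bigl(\sum_{i=1}^{N} v_i\,\kappa_i(\mathbf r)\Bigr)^{\!2} d\mathbf r \;=\; \Bigl\|\sum_{i=1}^{N} v_i\,\kappa_i\Bigr\|_{L^2(\Omega)}^{2} \;\ge\; 0,
\]
where the interchange of finite sum and integral is justified by $\kappa_i \in L^2$.

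For strict positive-definiteness, suppose the displayed quantity vanishes for some $v\neq 0$. Then $\sum_i v_i\,\kappa_i = 0$ in $L^2(\Omega)$, hence almost everywhere on $\Omega$. This is precisely the hypothesis ruled out by (A2) (linear independence of kernel translates at distinct centers), which forces $v=0$. Contrapositively, $v^\top G v > 0$ for every nonzero $v$, completing the SPD conclusion.

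The "hard part" is essentially cosmetic: the entire argument is the standard inner-product Gram construction, and the only nontrivial input is (A2), which has already been assumed and is known to hold for the kernels of interest (e.g., Gaussians have nonvanishing Fourier transform, so their translates at distinct centers are linearly independent). The one care-point worth highlighting in the writeup is the initial $L^2$-membership step, since the general statement quantifies over $\sigma$-finite $\Omega$; in practice this is harmless under the standing assumptions.
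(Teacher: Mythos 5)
Your proof is correct and follows essentially the same route as the paper's: write the quadratic form as the squared $L^2$ norm of $\sum_i v_i\kappa_i$, then invoke (A2) to upgrade positive semi-definiteness to strict definiteness. The only addition over the paper's version is the preliminary remark that each $\kappa_i\in L^2(\Omega)$, which is a reasonable care-point but does not change the argument.
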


\begin{proof}
Symmetry is immediate. For any \(\mathbf c\in\mathbb{R}^N\setminus\{0\}\),
let \(\phi_{\mathbf c}(\mathbf r)=\sum_i c_i\,\kappa_i(\mathbf r)\).
Then
\[
  \mathbf c^\top G \mathbf c
  = \int_\Omega \phi_{\mathbf c}(\mathbf r)^2\,d\mathbf r \;\ge\; 0.
\]
If \(\mathbf c^\top G \mathbf c=0\), then \(\phi_{\mathbf c}=0\) in \(L^2\),
hence a.e.; by (A2) the translates are linearly independent,
so \(\mathbf c=0\), a contradiction. Thus \(G\) is SPD.
\end{proof}

\paragraph{Intuition.}
Think of \(\{\kappa_i\}\) as a set of directions in a Hilbert space.
Their Gram matrix computes inner products. If no nontrivial combination of
the \(\kappa_i\) cancels out, the quadratic form \(\mathbf c^\top G\mathbf c\)
is strictly positive for all nonzero \(\mathbf c\).

\subsubsection{Exact recovery of features}

\begin{proposition}[Feature inversion with correct \(\alpha\)]
\label{prop:features}
Let \(S\in\Sspace\) with \(\enc(S)=(\rho,\mathbf h)\) and distinct centers
\(\{\mathbf r_i\}\). Define \(G,B\) from the recovered centers as above.
Then
\[
  B \;=\; \frac{1}{\alpha}\,G\,\mathbf X
  \qquad\text{and}\qquad
  \mathbf X \;=\; \alpha\,G^{-1} B,
\]
so the recovered features equal the original features.
\end{proposition}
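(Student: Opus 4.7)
The plan is to verify the linear identity $B = \tfrac{1}{\alpha} G\,\mathbf{X}$ by directly substituting the encoding formula for $\mathbf{h}$ into the definition of $B$, and then invoke Lemma~\ref{lem:spd} to invert $G$ and conclude. The argument is essentially bookkeeping: the nontrivial content has already been packaged into Lemma~\ref{lem:spd} and assumption (A3).

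First I would align notation. Since the original positions $\{\mathbf{r}_i\}$ are distinct, identifiability (A3) guarantees that the centers recovered in Step~2 of $\dec$ coincide with the true centers up to permutation; after relabeling, we may assume $\mathbf{r}_i^\star = \mathbf{r}_i$ and hence $\kappa_i^\star = \kappa_i$. This matches the kernels used to form $G$ and $B$ with the ones used in encoding.

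Next I would substitute $\mathbf{h}(\mathbf{r}) = \tfrac{1}{\alpha}\sum_j \mathbf{x}_j\,\kappa_j(\mathbf{r})$ into $B_{i:} = \int_\Omega \mathbf{h}(\mathbf{r})\,\kappa_i(\mathbf{r})\,d\mathbf{r}$. The integrand is a finite sum of products of integrable scalar kernels with constant vector coefficients, so Fubini/linearity of the integral lets us pull the sum and the $\mathbf{x}_j$ outside, giving
\[
    B_{i:} \;=\; \frac{1}{\alpha}\sum_{j=1}^{N}\mathbf{x}_j\int_\Omega \kappa_i(\mathbf{r})\,\kappa_j(\mathbf{r})\,d\mathbf{r} \;=\; \frac{1}{\alpha}\sum_{j=1}^{N} G_{ij}\,\mathbf{x}_j .
\]
Stacking these rows over $i$ yields the matrix identity $B = \tfrac{1}{\alpha}\,G\,\mathbf{X}$. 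Finally, Lemma~\ref{lem:spd} shows that $G$ is symmetric positive-definite, hence invertible, so multiplying both sides by $\alpha G^{-1}$ gives $\mathbf{X} = \alpha\,G^{-1} B$, and the rows of this $\mathbf{X}$ are exactly the original features $\mathbf{x}_i$.

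There is no real obstacle here: the only place that required genuine structural input was Lemma~\ref{lem:spd} (which in turn leans on (A2)) for invertibility of $G$, and (A3) for aligning recovered centers with true centers. Regularity is mild — $\rho$ and $\mathbf{h}$ are finite superpositions of $L^1$ kernels, so all integrals involved are finite and all interchanges of sum and integral are legitimate — and no further analytic machinery is needed.
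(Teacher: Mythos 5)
Your proof is essentially identical to the paper's: both substitute the encoding formula for $\mathbf{h}$ into $B_{i:}$, use linearity of the integral to identify the result with $\tfrac{1}{\alpha}\sum_j G_{ij}\mathbf{x}_j$, stack rows, and invoke Lemma~\ref{lem:spd} for invertibility. Your extra remark about invoking (A3) to align the recovered centers with the true ones is a reasonable clarification the paper leaves implicit, but it does not change the route.
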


\begin{proof}
From \(\mathbf h(\mathbf r)=\frac{1}{\alpha}\sum_j \mathbf x_j\,\kappa_j(\mathbf r)\),
\[
  B_{i:}
  = \int_\Omega \mathbf h(\mathbf r)\,\kappa_i(\mathbf r)\,d\mathbf r
  = \frac{1}{\alpha}\sum_j \mathbf x_j \int_\Omega
      \kappa_j(\mathbf r)\,\kappa_i(\mathbf r)\,d\mathbf r
  = \frac{1}{\alpha}\sum_j G_{ij}\,\mathbf x_j .
\]
Stacking rows gives \(B=\frac{1}{\alpha}G\,\mathbf X\). Invertibility
follows from Lemma~\ref{lem:spd}, yielding \(\mathbf X=\alpha G^{-1}B\).
\end{proof}

\paragraph{Intuition.}
Because \(\mathbf h\) is a linear combination of the same kernels that make up
\(\rho\), projecting \(\mathbf h\) onto each kernel recovers the corresponding
coefficient vector. The Gram matrix accounts for overlap between kernels; its
inverse untangles that overlap.

\subsubsection{Recovering positions from the density}

\begin{proposition}[Position recovery]
\label{prop:positions}
Let \(S\in\Sspace\) with \(\enc(S)=(\rho,\mathbf h)\) and \(N=|S|\).
If (A3) holds, then the minimizers of \eqref{eq:cords-pos} are exactly the
ground-truth centers up to permutation, and the optimal value is \(0\).
\end{proposition}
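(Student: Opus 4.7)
The plan is to split the statement into two claims: (i) the optimal value of the objective in \eqref{eq:cords-pos} is zero, and (ii) any minimizer agrees with the ground-truth centers up to permutation. Each claim reduces to a short formal step once assumption (A3) is in hand.

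First I would substitute the ground-truth centers $\{\mathbf r_i\}_{i=1}^N$ into the objective. By the definition of $\enc$, $\rho = \tfrac{1}{\alpha}\sum_{i=1}^N K(\cdot;\mathbf r_i)$, so the integrand vanishes identically and the objective evaluates to $0$. Since the objective is a nonnegative integral of a square, this is both a lower bound attained and a global minimum. Hence the optimal value is $0$ and the ground-truth tuple is one minimizer.

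Next, for the uniqueness part, let $\{\mathbf r_i^\star\}_{i=1}^N$ be any minimizer; by the previous step it also attains value $0$. The nonnegative residual
\[
  \rho(\mathbf r) - \tfrac{1}{\alpha}\sum_{i=1}^N K(\mathbf r;\mathbf r_i^\star)
\]
then has zero $L^2$ norm, so it vanishes almost everywhere, and by continuity of finite sums of translates it vanishes pointwise on $\Omega$. Combining with the ground-truth representation of $\rho$ gives the equal-weight mixture identity $\sum_{i=1}^N K(\cdot;\mathbf r_i^\star) = \sum_{i=1}^N K(\cdot;\mathbf r_i)$. Assumption (A3) (identifiability of equal-weight kernel mixtures) then forces $\{\mathbf r_i^\star\} = \{\mathbf r_i\}$ as multisets, i.e., equality up to permutation. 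The assumed distinctness of positions in $\Sspace$ ensures that the corresponding $N$-tuple is unambiguous, so "up to permutation" is the precise qualifier.

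The main obstacle in this approach is not really a proof obstacle but a modeling one: all substantive uniqueness content has been pushed into (A3). For Gaussians, or more generally translation-invariant kernels with everywhere-nonzero Fourier transform, (A3) is a classical identifiability result and can be cited directly; for exotic choices of $K$ it must be verified separately before Proposition~\ref{prop:positions} applies. A minor care point is the a.e.-versus-everywhere distinction, which I would handle via continuity of $K$, and the fact that \eqref{eq:cords-pos} is generally nonconvex in the centers; nonconvexity affects practical optimization but does not weaken the statement here, since we only claim that the global minimizers coincide with the ground truth.
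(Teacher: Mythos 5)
Your proof is correct and follows essentially the same route as the paper's: show the ground-truth centers achieve objective value zero, then pass any zero-value minimizer through (A3) to conclude equality up to permutation. The only addition is the explicit a.e.-to-everywhere step via continuity of finite kernel sums, which the paper leaves implicit but which you are right to flag.
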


\begin{proof}
For the ground-truth centers \(\{\mathbf r_i\}\),
the integrand in \eqref{eq:cords-pos} vanishes pointwise by construction, so
the objective equals \(0\). Conversely, any minimizer with value \(0\)
satisfies
\(
 \rho(\cdot)=\frac{1}{\alpha}\sum_{i=1}^N K(\cdot;\mathbf r_i^\star),
\)
hence \(\sum_i K(\cdot;\mathbf r_i)
      = \sum_i K(\cdot;\mathbf r_i^\star)\).
By (A3) the sets of centers coincide up to permutation.
\end{proof}

\paragraph{Intuition.}
The density \(\rho\) must be explainable as a sum of identical shapes (kernels)
placed somewhere in \(\Omega\). If equal-weight mixtures are unique,
there is only one way to place \(N\) such shapes to obtain exactly \(\rho\):
at the original centers (order irrelevant).

\subsection{Set--field duality (permutation invariance and inverse consistency)}

We now formalize the duality between sets and fields without invoking additional
geometric symmetries.

\begin{definition}[Set--field dual pair]
\label{def:dual-pair}
A pair of mappings
\[
    \enc : \Sspace \longrightarrow \Fspace, \qquad
    \dec : \Fspace \longrightarrow \Sspace
\]
is a \emph{set--field dual pair on \(\Omega\)} if:
\begin{enumerate}[leftmargin=1.5em,itemsep=3pt,topsep=2pt]
  \item \textbf{Inverse consistency on representable fields:}
        \(\dec\circ\enc=\mathrm{id}_{\Sspace}\) and
        \(\enc\circ\dec=\mathrm{id}_{\Frep}\), where
        \(\Frep=\enc(\Sspace)\subset\Fspace\).
  \item \textbf{Permutation invariance:}
        For any permutation \(\pi\) of indices,
        \(\enc\bigl(\{(\mathbf r_{\pi(i)},\mathbf x_{\pi(i)})\}_i\bigr)
        = \enc\bigl(\{(\mathbf r_i,\mathbf x_i)\}_i\bigr)\).
  \item \textbf{Metric compatibility (abstract).}
        There exist admissible metrics \(d_{\Sspace}\) on \(\Sspace\) and
        \(d_{\Fspace}\) on \(\Fspace\), and a constant \(C>0\), such that
        \[
          d_{\Fspace}\bigl(\enc(S_1),\enc(S_2)\bigr)
          \;\le\; C\,d_{\Sspace}(S_1,S_2)
          \qquad \forall\, S_1,S_2\in\Sspace.
        \]
\end{enumerate}
\end{definition}

\paragraph{Inverse consistency.}
By Propositions~\ref{prop:count}, \ref{prop:features}, and \ref{prop:positions},
\(\dec\circ\enc=\mathrm{id}_{\Sspace}\).
Moreover, for any \(f\in\Frep\) there exists \(S\) with \(f=\enc(S)\);
then \(\enc\circ\dec(f)=\enc(S)=f\), proving \(\enc\circ\dec=\mathrm{id}_{\Frep}\).

\paragraph{Permutation invariance (and why it matters).}
Permutation invariance is immediate because \eqref{eq:cords-encode} uses sums:
relabeling does not change \(\rho\) or \(\mathbf h\).
This captures the fact that sets are inherently unordered and ensures that
optimization and learning in field space cannot depend on arbitrary labelings.

\paragraph{Metric compatibility (discussion only).}
We do not instantiate concrete metrics here. Informally, (A1) implies that small
changes in positions and features translate into small changes of \(\rho\) and
\(\mathbf h\) in \(L^1\), so \(\enc\) is Lipschitz for natural matching-type
distances on sets. This is useful for stability analyses but is not required
for the exact decoding results above.

\paragraph{Conclusion.}
With (A1)--(A3), the kernel-based \((\enc,\dec)\) defined in
\S\ref{app:operators} forms a set--field dual pair in the sense of
Definition~\ref{def:dual-pair}, and admits exact decoding with the
correct \(\alpha\) factors. Lemma~\ref{lem:spd} ensures well-posed feature
inversion via an SPD Gram matrix.

\medskip
\noindent\textbf{Remarks.}
(i) Assumption (A3) holds for a broad class of kernels; for example, for
translation-invariant \(K\) with strictly positive real-analytic \(k\), the
decomposition into equal-weight translates is unique up to permutation.
(ii) In practice, the position recovery objective is smooth for standard \(K\),
so gradient-based optimization with multi-start typically suffices; once
centers are close, a few Newton or Gauss--Newton iterations refine them to
machine precision before solving the small linear system for features.

\section{CORDS framework for Graphs}

In order to extend our work to graphs, or data with discrete relational objects (such as edges), we will define \textbf{Field of Graph} as a quadruple
\[
      f \;=\ \bigl(\rho_{\text n}, \rho_{\text e},
          \mathbf h_{\text n}, \mathbf h_{\text e}\bigr)
\]
where
\begin{itemize}
  \item \( \rho_{\text{n}} : \Omega \to \mathbb{R}_{\geq 0} \) is the \emph{node density};
  \item \( \rho_{\text{e}} : \Omega \times \Omega \to \mathbb{R}_{\geq 0} \) is the \emph{edge density};
  \item \( \mathbf{h}_\text{n} : \Omega \to \mathbb{R}^{d_x} \) is a vector-valued \emph{ node feature field}.
    \item \( \mathbf{h}_\text{e} : \Omega \times \Omega\to \mathbb{R}^{d_y} \) is a vector-valued \emph{ edge feature field}.
\end{itemize}

We now specify a concrete construction of the encoding map $\Phi : \Gspace_\Omega \to \Fspace$ in the Graph--Field dual pair. This construction associates to each graph a distributional representation over the domain \( \Omega \), using fixed spatial kernels to define the node density, edge density, and feature fields.

Let us fix two continuous, positive kernels:
\begin{align*}
      \kernn &:\Omega\times\Omega \;\to\; \R_{\ge 0}, &
      \kerne &:(\Omega\times\Omega)^2 \;\to\; \R_{\ge 0}.
\end{align*}
The kernel \( \kernn \) determines how node mass is distributed across space, while \( \kerne \) governs the representation of edges.

Given a geometric graph $G = (V, E, \bm{X}, \bm{Y}, \bm{R}) \in \Gspace_\Omega$, we define its field representation
\[
  \Phi(G) = \bigl(\rho_{\text{n}}, \rho_{\text{e}}, \mathbf h_{\text{n}}, \mathbf h_{\text{e}}\bigr)
\]
as
\begin{equation}\label{eq:encoding}
\begin{aligned}
  \rho_{\text{n}}(\bm{r}) &= \sum_{v\in V} \kernn(\bm{r}; \bm{r}_v), \quad &
  \rho_{\text{e}}(\bm{r}_1, \bm{r}_2) &= \sum_{(u,v)\in E} \kerne\!\bigl((\bm{r}_1, \bm{r}_2); (\bm{r}_u, \bm{r}_v)\bigr), \\
  \mathbf h_{\text{n}}(\bm{r}) &= \sum_{v\in V} \mathbf x_v \kernn(\bm{r}; \bm{r}_v), \quad &
  \mathbf h_{\text{e}}(\bm{r}_1, \bm{r}_2) &= \sum_{(u,v)\in E} \mathbf y_{uv} \kerne\!\bigl((\bm{r}_1, \bm{r}_2); (\bm{r}_u, \bm{r}_v)\bigr).
\end{aligned}
\end{equation}

\paragraph{Special case: Dirac kernels.}
Choosing
\[
      \kernn(\bm{r};\bm{r}_v)=\delta(\bm{r}-\bm{r}_v),
      \qquad
      \kerne((\bm{r}_1,\bm{r}_2);(\bm{r}_u,\bm{r}_v))
      =\delta(\bm{r}_1-\bm{r}_u)\,\delta(\bm{r}_2-\bm{r}_v),
\]
recovers the standard discrete graph structure in distributional form. This limiting case shows that our construction generalizes the original discrete graph while embedding it in a continuous domain. With a suitable choice of interactions, we obtain the traditional message passing.

\subsection{Continuous Graph Convolution and the Discrete Message--Passing Limit}
\label{app:dirac_graph}

Throughout this appendix we work on an ambient domain
$\Omega\subseteq\R^{d}$ equipped with the Lebesgue measure
$\mathrm d\mathbf r$.  A geometric graph
$G=(V,E,\mathbf R,\mathbf X)$ with node positions
$\mathbf r_v\in\Omega$ and node features
$\mathbf h^{(k)}_v\in\R^{d_h}$ is encoded into continuous objects
\[
  \rho_{\mathrm n},\;\rho_{\mathrm e},\;
  \mathbf h^{(k)}:\Omega\longrightarrow\R^{d_h}
\]
 In particular,
for a \emph{node kernel} $\kernn$ and an \emph{edge kernel} $\kerne$ we have
\begin{equation}\label{eq:appendix-enc}
\begin{aligned}
  \rho_{\mathrm n}(\mathbf r)
      &=\sum_{u\in V}\kernn(\mathbf r;\mathbf r_u),\\
  \rho_{\mathrm e}(\mathbf r_1,\mathbf r_2)
      &=\!\sum_{(u,v)\in E}\kerne\!\bigl((\mathbf r_1,\mathbf r_2);
                                       (\mathbf r_u,\mathbf r_v)\bigr),\\
  \mathbf h^{(k)}(\mathbf r)
      &=\sum_{u\in V}\mathbf h^{(k)}_u\,\kernn(\mathbf r;\mathbf r_u).
\end{aligned}
\end{equation}

\subsubsection{Continuous convolution}
Given a field feature $\mathbf h^{(k)}:\Omega\to\R^{d_h}$ at layer~$k$,
the \emph{continuous graph convolution} introduced in
\begin{equation}\label{eq:conv-op}
  \mathbf h^{(k+1)}(\mathbf r_i)
  \;=\;
  \sigma\!
  \Bigl(
    \mathbf W\;
    \underbrace{\int_\Omega
      \mathbf h^{(k)}(\mathbf r_j)\,
      \rho_{\mathrm e}(\mathbf r_i,\mathbf r_j)\,
      \mathrm d\mathbf r_j}_{=:~\mathbf m^{(k)}(\mathbf r_i)}
  \Bigr),
\end{equation}
where $\mathbf W\in\R^{d_h\times d_h}$ is a trainable linear map,
$\sigma$ is any point-wise non-linearity (ReLU, SiLU, \dots),
and $\mathbf m^{(k)}$ denotes the \emph{message field} aggregated
from all spatial locations.

\subsubsection{Dirac kernels and the discrete limit}
We now take
\[
    \kernn(\mathbf r;\mathbf r_u)\;=\;\delta(\mathbf r-\mathbf r_u),
    \qquad
    \kerne\bigl((\mathbf r_1,\mathbf r_2);(\mathbf r_u,\mathbf r_v)\bigr)
      =\delta(\mathbf r_1-\mathbf r_u)\,
       \delta(\mathbf r_2-\mathbf r_v),
\]
i.e.\ each node (edge) is represented by a Dirac delta of unit mass
centred at its position.

\begin{proposition}[Continuous convolution $\;\longrightarrow\;$ message passing]
\label{prop:dirac-limit}
Let $\kernn,\kerne$ be the Dirac kernels above.
Then, evaluating \eqref{eq:conv-op} at the node positions
$\mathbf r_i=\mathbf r_u$ yields
\begin{equation}\label{eq:mp-update}
  \mathbf h^{(k+1)}_u
  \;=\;
  \sigma\!\Bigl(
    \mathbf W\;
    \underbrace{\sum_{v\in V}e_{uv}\,
                 \mathbf h^{(k)}_v}_{\textstyle
                 =:\;\mathbf m^{(k)}_u}
  \Bigr),
\end{equation}
where $e_{uv}=1$ if $(u,v)\in E$ and $0$ otherwise.
That is, the continuous convolution reduces exactly to the standard
message-passing update with \emph{sum aggregation}.
\end{proposition}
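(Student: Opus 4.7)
The plan is to substitute the Dirac forms of $\kernn$ and $\kerne$ into the message field inside \eqref{eq:conv-op} and apply the sifting property twice. First, I would unfold $\rho_{\mathrm e}$ using its product-of-Diracs structure and factor out $\delta(\mathbf r_i-\mathbf r_u)$ from the $\mathbf r_j$-integral, since it does not depend on $\mathbf r_j$. The remaining integral $\int \mathbf h^{(k)}(\mathbf r_j)\,\delta(\mathbf r_j-\mathbf r_v)\,\mathrm d\mathbf r_j$ collapses to $\mathbf h^{(k)}(\mathbf r_v)$, reducing the message field to
\[
  \mathbf m^{(k)}(\mathbf r_i) \;=\; \sum_{(u,v)\in E} \delta(\mathbf r_i - \mathbf r_u)\,\mathbf h^{(k)}(\mathbf r_v).
\]

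Second, I would evaluate $\mathbf h^{(k)}$ at a node position. Since $\mathbf h^{(k)}(\mathbf r)=\sum_w \mathbf h^{(k)}_w\,\delta(\mathbf r-\mathbf r_w)$, the value at $\mathbf r_v$ should equal the coefficient $\mathbf h^{(k)}_v$ whenever node positions are pairwise distinct. To make this step rigorous, I would work with a regularization $\kernn^\epsilon$ (e.g.\ a normalized Gaussian of width $\epsilon$), carry the computation through with the resulting smooth fields, and let $\epsilon\to 0$. Under distinct positions, the overlap integrals $\int \kernn^\epsilon(\mathbf r;\mathbf r_w)\,\kernn^\epsilon(\mathbf r;\mathbf r_v)\,\mathrm d\mathbf r$ vanish for $w\neq v$ and converge, after the appropriate normalization, to a Kronecker indicator for $w=v$; this formally justifies the collapse $\mathbf h^{(k)}(\mathbf r_v)\mapsto \mathbf h^{(k)}_v$.

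Third, I would evaluate $\mathbf m^{(k)}$ at $\mathbf r_i = \mathbf r_u$ for a fixed node $u$. The factor $\delta(\mathbf r_u - \mathbf r_{u'})$ inside the edge sum is nonzero only when $u'=u$ (again using the distinct-positions assumption), so the edge sum restricts to edges incident to $u$, giving $\sum_{v:(u,v)\in E} \mathbf h^{(k)}_v = \sum_{v\in V} e_{uv}\,\mathbf h^{(k)}_v$. Substituting this back into \eqref{eq:conv-op}, multiplying by $\mathbf W$, and applying $\sigma$ yields exactly \eqref{eq:mp-update}.

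The main obstacle is that the distributional product of coincident Dirac deltas is not well defined in the Schwartz sense, so steps two and three cannot be executed by naively manipulating Dirac symbols. The natural workaround is the narrow-kernel limit outlined above: compute everything for smooth approximants with compact support, use distinctness of node positions to kill cross-overlap integrals in the limit, and check that the diagonal overlap, when normalized consistently with the unit-mass convention built into the Dirac kernels, produces the Kronecker behavior on the node set. Once this is in place, steps one through three go through deterministically and give the claimed message-passing form.
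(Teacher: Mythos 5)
Your proof follows essentially the same algebraic route as the paper's: expand $\rho_{\mathrm e}$ into a sum of Dirac products, apply the sifting property to the $\mathbf r_j$-integral, evaluate at a node position $\mathbf r_i=\mathbf r_u$, and read off the Kronecker behavior from the distinctness of positions. The difference is one of rigor, not strategy, and it is to your credit. The paper's proof writes $\mathbf h^{(k)}(\mathbf r_q)\mapsto\mathbf h^{(k)}_q$ and $\delta(\mathbf r_u-\mathbf r_p)\mapsto\delta_{up}$ silently, which is strictly speaking a formal manipulation: with Dirac $\kernn$, the field $\mathbf h^{(k)}(\mathbf r)=\sum_w \mathbf h^{(k)}_w\,\delta(\mathbf r-\mathbf r_w)$ is a distribution and cannot be evaluated pointwise, and $\delta(0)$ is not $1$. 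You flag both of these cheats explicitly and propose the standard fix — mollify $\kernn,\kerne$ to smooth bump functions $\kernn^\epsilon$, carry the computation through, and take $\epsilon\to 0$ using distinctness of node positions to kill the cross terms. That is the right workaround, and it makes the argument an honest limit theorem rather than a heuristic calculus of deltas.

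One caveat worth keeping in mind: the "appropriate normalization" you invoke for the diagonal overlap is not innocuous. With unit-mass mollifiers, $\int \kernn^\epsilon(\cdot;\mathbf r_w)\,\kernn^\epsilon(\cdot;\mathbf r_w)\,d\mathbf r$ diverges as $\epsilon\to 0$, so obtaining a clean Kronecker indicator requires either rescaling by $(4\pi\epsilon^2)^{d/2}$ or, more cleanly, interpreting $\mathbf h^{(k)}$ not as a Dirac-encoded distribution but as a genuine function on $\Omega$ whose restriction to node positions gives the node features, with only $\rho_{\mathrm e}$ mollified. The paper implicitly takes the latter reading (which is why it writes $\mathbf h^{(k)}(\mathbf r_q)=\mathbf h^{(k)}_q$ without comment); making that choice explicit would shorten your argument and avoid the divergent diagonal altogether. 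Either way, the conclusion and the route match the paper's; you have simply been more scrupulous about the distributional bookkeeping.
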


\begin{proof}
Using the encoding~\eqref{eq:appendix-enc} with Dirac kernels,
\[
  \rho_{\mathrm e}(\mathbf r_i,\mathbf r_j)
    ~=~\sum_{(p,q)\in E}
        \delta(\mathbf r_i-\mathbf r_p)\,
        \delta(\mathbf r_j-\mathbf r_q).
\]
Fix a node $u$ and set $\mathbf r_i=\mathbf r_u$.
Substituting into the message integral in~\eqref{eq:conv-op} gives
\begin{align*}
  \mathbf m^{(k)}(\mathbf r_u)
     &=\int_\Omega
        \mathbf h^{(k)}(\mathbf r_j)\;
        \sum_{(p,q)\in E}\!
          \delta(\mathbf r_u-\mathbf r_p)\,
          \delta(\mathbf r_j-\mathbf r_q)\;
        \mathrm d\mathbf r_j \\[2pt]
     &=\sum_{(p,q)\in E}
         \delta(\mathbf r_u-\mathbf r_p)\;
         \mathbf h^{(k)}(\mathbf r_q) \\[2pt]
     &=\sum_{(p,q)\in E}
         \delta_{up}\,
         \mathbf h^{(k)}_q
      \;=\;
      \sum_{v\in V} e_{uv}\,\mathbf h^{(k)}_v\,,
\end{align*}
where $\delta_{up}$ is the Kronecker delta
(the Dirac delta evaluates to $1$ iff $\mathbf r_u=\mathbf r_p$,
or equivalently $u=p$).
Finally, plugging $\mathbf m^{(k)}(\mathbf r_u)=\mathbf m^{(k)}_u$
back into~\eqref{eq:conv-op} gives~\eqref{eq:mp-update}.
\end{proof}

\begin{remark}[Vanishing-width Gaussian kernels]
\label{rem:gaussian-limit}
If $\kernn$ and $\kerne$ are isotropic Gaussians of width~$\sigma$
(as used in Eq.~(9) of the main text), then
$\kernn,\kerne\;\xrightarrow{\;\sigma\to0\;}$ Dirac distributions
in the sense of tempered distributions.  
Therefore the continuous convolution converges to the
message-passing update \eqref{eq:mp-update} as $\sigma\to0$.
\end{remark}

\subsection{Extending CORDS to Non-Geometric Graphs}
\label{app:non-geometric-graphs}

The core construction of Fields of Graphs (FOG) relies on the existence of an explicit geometric embedding \( p : V \to \Omega \), which maps each node to a position in a continuous domain. However, many real-world graphs do not come equipped with natural spatial coordinates. To extend our framework to such non-geometric graphs, we propose using spectral embeddings derived from the graph's topology.

\subsubsection{Spectral Embeddings via Graph Laplacian}

Given a graph \( G = (V, E, \bm{X}) \) without predefined node positions, we compute a spectral embedding based on the graph Laplacian. Specifically, let \( A \in \mathbb{R}^{|V| \times |V|} \) be the adjacency matrix and \( D \) the diagonal degree matrix with \( D_{ii} = \deg(i) \). The (unnormalized) graph Laplacian is defined as:
\[
    L = D - A.
\]
Let \( \{\lambda_i\}_{i=1}^{|V|} \) be the eigenvalues of \( L \), with corresponding eigenvectors \( \{\bm{v}_i\}_{i=1}^{|V|} \). We define a spectral embedding
\[
    p_{\mathrm{spec}}(v) = \big( \bm{v}_2(v), \bm{v}_3(v), \ldots, \bm{v}_{d+1}(v) \big),
\]
where \( \bm{v}_i(v) \) denotes the \( v \)-th entry of the \( i \)-th eigenvector. The first non-trivial eigenvectors capture global structural information, positioning nodes with similar topological roles close to each other in \( \mathbb{R}^d \).

This spectral embedding provides a continuous proxy for node positions, enabling us to apply the same FOG construction as in the geometric case. Effectively, it allows us to treat arbitrary graphs as if they were embedded in a geometric space, lifting them into a continuous field representation.

\subsubsection{Limitations and Practical Considerations}

While spectral embeddings offer a principled way to introduce geometry into non-geometric graphs, they come with trade-offs. Specifically:
\begin{itemize}[leftmargin=1.5em, itemsep=0.3em]
    \item The embedding dimensionality \( d \) is a design choice. Using fewer dimensions provides a compressed view of the graph's topology, which can be sufficient for downstream tasks like regression or classification.
    \item However, reducing \( d \) also means that the bijective property of the Graph--Field dual pair is lost, as the original graph cannot be perfectly reconstructed from the continuous representation.
    \item For applications where reversible mapping is not critical (e.g., predictive tasks on node-level or graph-level properties), this trade-off is acceptable. On the other hand, generative tasks that require recovering discrete graph structure from fields would necessitate higher-dimensional embeddings or alternative encoding strategies.
\end{itemize}

In summary, spectral embeddings allow us to extend the FOG framework to general graphs without predefined coordinates, enabling continuous representations even in the absence of natural geometry.

\section{Additional Experimental Details}
\label{app:implementation_details}

\subsection{Approximating the Inverse Decoding in Practice}
\label{app:psi-practical}

The exact inversion formulas in Appendix~\ref{app:CORDS_general} involve
domain integrals and solving linear systems built from kernel inner products.
When fields are only available at finitely many sample locations
(either on a grid or from a sampler) these integrals are approximated by
Monte Carlo (MC). Below we describe the practical decoding we use in all experiments.
It proceeds in three steps:
\emph{(i) estimate the number of elements \(N\)}, \emph{(ii) recover their
positions}, and \emph{(iii) reconstruct their feature vectors}.  
We emphasise the intuition at each step and show how importance sampling
enters the feature inversion.

\paragraph{Notation and sampling setup.}
Let \(\{\,\mathbf r_s\,\}_{s=1}^{S}\subset\Omega\) be the evaluation points
(\emph{field samples}). They are drawn either
\emph{uniformly} from a bounding box of \(\Omega\), or by
\emph{importance sampling} from a proposal \(q(\mathbf r)\) proportional to a
kernel mixture centred near the (unknown) element locations.
We write \(\rho(\mathbf r_s)\) and \(\mathbf h(\mathbf r_s)\) for the sampled
density and feature fields at those points.  
In all cases, MC weights are taken as
\[
  w_s \;=\; \frac{1}{S\,q(\mathbf r_s)} \quad\text{(importance sampling)},
  \qquad
  w_s \;=\; \frac{1}{S} \quad\text{(uniform over a box)}.
\]
\emph{Remark:} for uniform sampling the mathematically unbiased weight is
\(\tfrac{\mathrm{Vol}(\Omega_{\!\text{box}})}{S}\); in our implementation we use
\(\tfrac{1}{S}\) and rely on the fact that the unknown constant volume multiplies
\emph{both} sides of the linear system in Step~(iii) and cancels out in the solve.

\medskip
\noindent\textbf{(i) Estimating the number of elements \(\widehat N\) and normalising \(\rho\).}
\smallskip

\emph{Why this must come first.}
The subsequent position fit needs to know \emph{how many kernels} to place.
Hence we estimate \(N\) directly from the sampled density before any other step.

\emph{How we estimate it.}
In theory, \(N = \int_\Omega \rho(\mathbf r)\,d\mathbf r\).
With sampling \(\mathbf r_s\!\sim q\) the MC estimator is
\[
  \widehat N_{\text{MC}}
  \;=\; \sum_{s=1}^{S} \rho(\mathbf r_s)\,w_s
  \;=\; \frac{1}{S}\sum_{s=1}^{S}\frac{\rho(\mathbf r_s)}{q(\mathbf r_s)} .
\]
For importance sampling, \(q\) is only known up to a global constant
(\(q \propto \sum_u \kappa_{\text{prop}}(\cdot;\mathbf r_u)\));
for uniform sampling, the box volume enters \(q\).
In practice we avoid carrying these constants by working with a \emph{calibrated}
density: during training the encoder \(\Phi\) optionally rescales \(\rho\) so that
its sample mean equals the \emph{true} cardinality. At test time we therefore set
\[
  \widehat N \;=\; \text{round}\!\left(\frac{1}{S}\sum_{s=1}^S \rho(\mathbf r_s)\right).
\]
Because of MC noise \(\frac{1}{S}\sum_s \rho(\mathbf r_s)\) will rarely be an exact
integer; empirically it concentrates within \(\pm 0.5\) of the truth, so rounding is
appropriate.\footnote{If a calibration pass is not available, one can use the generic
\(\widehat N_{\text{MC}}\) above with explicit \(w_s=\tfrac{1}{S q(\mathbf r_s)}\);
for importance sampling, the unknown proportionality constant cancels after the
normalisation step below.}
This \(\widehat N\) fixes the number of kernels to fit in Step~(ii).

\medskip
\noindent\textbf{(ii) Recovering positions.}
\smallskip

\emph{Initialisation by a mixture fit.}
Given the samples \(\{(\mathbf r_s,\rho(\mathbf r_s))\}_{s=1}^S\) and the estimate
\(\widehat N\), we fit an \(\widehat N\)-component \emph{isotropic} Gaussian mixture
model (GMM) to the \(\mathbf r_s\), initialised with \(k\)-means++ on coordinates.
The resulting means \(\{\tilde{\mathbf r}_u\}_{u=1}^{\widehat N}\) are coarse
location estimates.  
(Optionally we search over \(\widehat N\pm\delta\) components by BIC and pick the
best, but we keep \(\widehat N\) unless BIC strongly prefers a neighbour.)

Here BIC denotes the Bayesian information criterion,
\(\mathrm{BIC}(k) = -2 \log L_k + p_k \log S\), where \(L_k\) is the maximized
likelihood of a \(k\)-component GMM with \(p_k\) free parameters fitted to
\(S\) samples.

\emph{Refinement by kernel matching.}
We then refine the centres by minimising the squared discrepancy between the
observed density and a superposition of kernel translates. Writing
\(\kappa(\mathbf r;\mathbf r_u)\!=\!K(\mathbf r;\mathbf r_u)\) and allowing a
global amplitude \(a>0\) to absorb small normalisation mismatches (e.g., boundary
truncation, unknown \(\alpha\)), we minimise
\begin{equation}
\label{eq:practical-pos-loss}
  \mathcal L_{\mathrm{pos}}(\{\mathbf r_u\},a)
  \;=\; \frac{1}{S}\sum_{s=1}^{S}
        \Bigl(\rho(\mathbf r_s) - a\,\frac{1}{\alpha}
              \sum_{u=1}^{\widehat N}\kappa(\mathbf r_s;\mathbf r_u)\Bigr)^2 .
\end{equation}
We run L\!BFGS for at most \(50\) iterations starting from the GMM means.
When the dynamic range is large, we minimise the same objective in \emph{log space}
(i.e., replace both terms by their \(\log\), with a small floor), which stabilises
the fit of \(a\) and the centres near sharp peaks.

\emph{Intuition.}
Step~(ii) exactly mirrors the theoretical position recovery
(Appendix~\ref{app:CORDS_general}): we seek the unique set of centres whose kernel
sum reproduces the observed \(\rho\). The GMM gives a good basin of attraction;
L\!BFGS then snaps the centres onto the mode locations determined by \(\rho\).

\medskip
\noindent\textbf{(iii) Reconstructing feature vectors from \(\mathbf h\).}
\smallskip

\emph{Theory recap.}
For the recovered centres \(\{\mathbf r_u\}_{u=1}^{\widehat N}\) define
\(\kappa_u(\mathbf r)=K(\mathbf r;\mathbf r_u)\).
The ideal (\(L^2\)) projection used in Appendix~\ref{app:CORDS_general} reads
\[
  G_{uv} \;=\; \int_\Omega \kappa_u(\mathbf r)\,\kappa_v(\mathbf r)\,d\mathbf r,
  \qquad
  B_{u:} \;=\; \int_\Omega \mathbf h(\mathbf r)\,\kappa_u(\mathbf r)\,d\mathbf r,
  \qquad
  \mathbf X \;=\; \alpha\,G^{-1}B .
\]
In code we approximate both integrals by MC with the \emph{same} weights \(w_s\).

\emph{Monte Carlo feature inversion (corrected).}
With samples \(\mathbf r_s\sim q\) and weights \(w_s=\tfrac{1}{S q(\mathbf r_s)}\),
define
\begin{align}
\label{eq:mc-G}
  \widehat G_{uv}
  &\;=\; \sum_{s=1}^{S} \kappa_u(\mathbf r_s)\,\kappa_v(\mathbf r_s)\,w_s,\\
\label{eq:mc-B}
  \widehat B_{u:}
  &\;=\; \sum_{s=1}^{S} \mathbf h(\mathbf r_s)\,\kappa_u(\mathbf r_s)\,w_s .
\end{align}
Then solve the \(\widehat N\times\widehat N\) system
\begin{equation}
\label{eq:mc-solve}
  \widehat G\,\widehat{\mathbf X}
  \;=\;
  \begin{cases}
    \widehat B, & \text{if unit-mass kernels are used (}\alpha=1\text{)},\\[2pt]
    \tfrac{1}{\alpha}\,\widehat B, & \text{otherwise},
  \end{cases}
  \qquad\text{and set }~~
  \widehat{\mathbf X} \leftarrow
  \begin{cases}
    \widehat{\mathbf X}, & (\alpha=1),\\
    \alpha\,\widehat{\mathbf X}, & (\alpha\neq 1).
  \end{cases}
\end{equation}

In our implementation we experimented with three standard radial kernels: Gaussian,
Laplacian, and Epanechnikov, each normalised to unit mass, so that \(\alpha = 1\)
and we simply solve \(\widehat G\,\widehat{\mathbf X} = \widehat B\).
Concretely, these kernels have the usual forms
\[
\kappa_{\text{Gauss}}(\mathbf r) \propto \exp\!\bigl(-\|\mathbf r\|_2^2 / 2\sigma^2\bigr),\qquad
\kappa_{\text{Lap}}(\mathbf r) \propto \exp\!\bigl(-\|\mathbf r\|_2 / \sigma\bigr),\qquad
\kappa_{\text{Epan}}(\mathbf r) \propto \max\!\bigl(0,\,1 - \|\mathbf r\|_2^2 / \sigma^2\bigr),
\]
where \(\sigma\) is a bandwidth parameter.
All three are positive, radially symmetric, and compactly or effectively compactly
supported, and in our experiments they lead to very similar decoding behaviour; we
therefore use Gaussians by default and view Laplacian/Epanechnikov kernels as
interchangeable alternatives.

For numerical robustness we add a tiny diagonal \(\varepsilon I\) to \(\widehat G\)
(\(\varepsilon\) is \(10^{-4}\) times the average diagonal) and fall back to
least squares if a direct solve fails. When element types are categorical
(e.g., one-hot), we take \(\mathrm{argmax}\) over the feature channels of each
row of \(\widehat{\mathbf X}\).

\emph{Why the proposal normalisation does not matter.}
If \(q\) is known only up to a constant (importance sampling) or includes an
unknown box volume (uniform), \(w_s\) is known up to the same constant factor
\(c\). Both \(\widehat G\) and \(\widehat B\) in
\eqref{eq:mc-G}--\eqref{eq:mc-B} are multiplied by \(c\), which cancels in
the linear system \(\widehat G\,\widehat{\mathbf X}=\widehat B\).
This is why using \(w_s=\tfrac{1}{S}\) under uniform sampling is sufficient in
practice, and why we can implement importance weights with an unnormalised
mixture \(q\propto\sum_u \kappa_{\text{prop}}(\cdot;\mathbf r_u)\).

\paragraph{Summary.}
\begin{enumerate}[leftmargin=1.5em,itemsep=2pt,topsep=2pt]
\item \emph{Cardinality:} estimate \(\widehat N\) from the sample mean of \(\rho\),
round to the nearest integer (non-integral values within \(\pm 0.5\) are expected).
\item \emph{Positions:} fit an \(\widehat N\)-component isotropic GMM to the sample
coordinates and refine the means by minimising \eqref{eq:practical-pos-loss}
with L\!BFGS (optionally in log-space, with a global amplitude \(a\)).
\item \emph{Features:} form \(\widehat G,\widehat B\) by the MC formulas
\eqref{eq:mc-G}--\eqref{eq:mc-B} using the same weights \(w_s\) as for the integral,
solve \(\widehat G\,\widehat{\mathbf X}=\widehat B\) (or \(\alpha\,\widehat G^{-1}\widehat B\)
if non-unit kernels are used), and post-process categorical channels by \(\mathrm{argmax}\).
\end{enumerate}

\paragraph{Practical notes (hyperparameters).}
We initialise the GMM by \(k\)-means++, search over \(\widehat N\pm\delta\) components
with \(\delta\approx 0.15\,\widehat N\) unless \(\widehat N\) is very small, and run
L\!BFGS for at most \(50\) iterations with a Wolfe line search. Importance sampling
uses the same kernel family as the density with a proposal bandwidth (\texttt{sample\_sigma})
close to the encoding bandwidth; temperature-sharpening of the proposal probabilities
helps focus samples near density peaks; overall complexity per instance is
\(\mathcal O(S\,\widehat N + \widehat N^3)\).

\begin{figure*}[t]
  \centering
  \includegraphics[width=\textwidth,
    trim=20 0 20 0,clip]{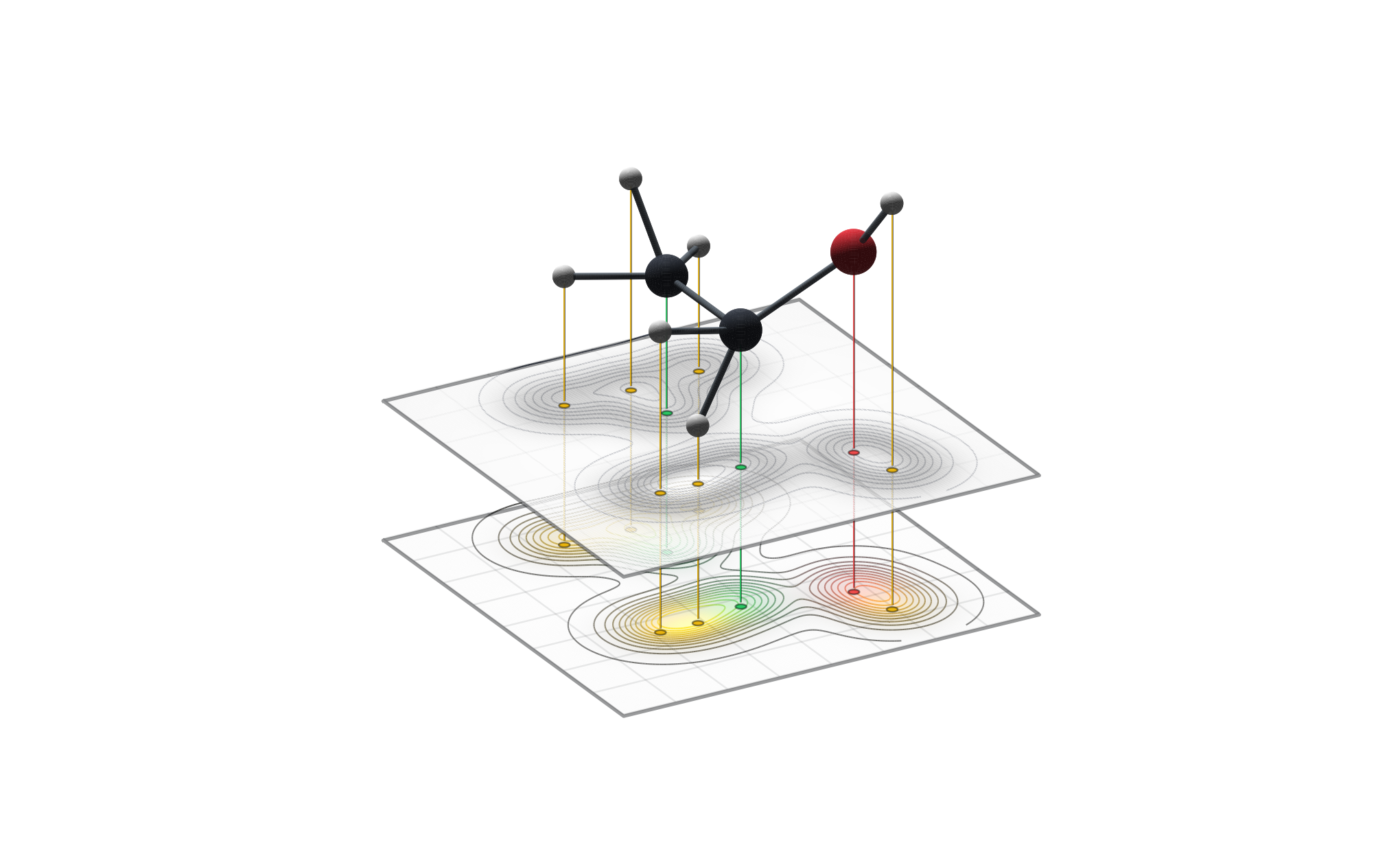}
  \caption{A molecular graph (top) is encoded with \CORDS{} into a density field $\rho(\mathbf r)$ (middle) and feature fields $h_k(\mathbf r)$ (bottom), which correspond to atom types here. The number of atoms is encoded directly in the density mass, $K = \int\rho(\mathbf r)\, d\mathbf r$.}
  \label{fig:GRAPHS_main}
\end{figure*}

\begin{figure}[htbp]
  \centering
  \begin{subfigure}[b]{0.35\textwidth}
    \includegraphics[trim=200 200 200 200, clip, width=\linewidth]{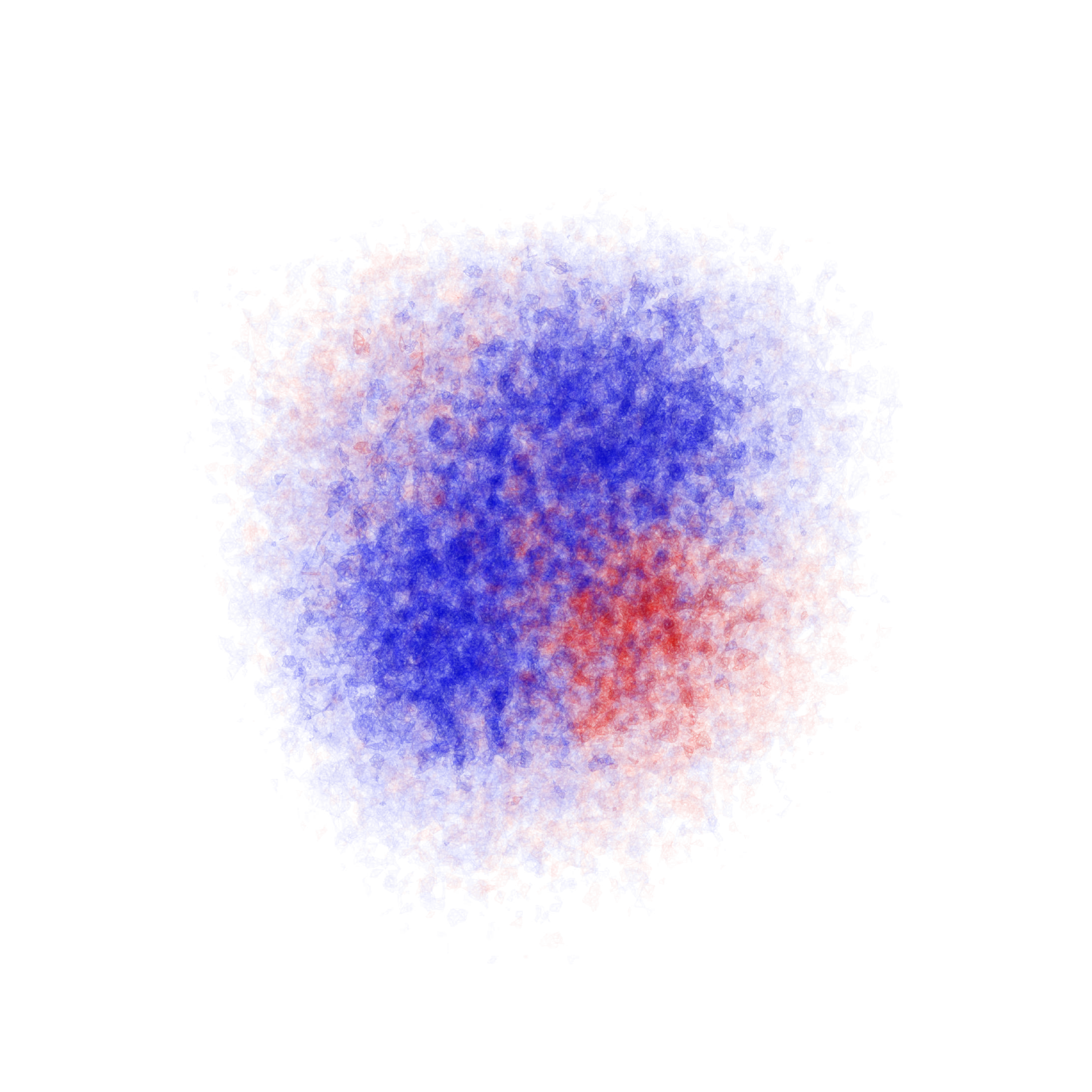}
    \caption{}
  \end{subfigure}
  \hspace{0.1\textwidth}
  \begin{subfigure}[b]{0.37\textwidth}
    \includegraphics[trim=220 220 220 220, clip, width=\linewidth]{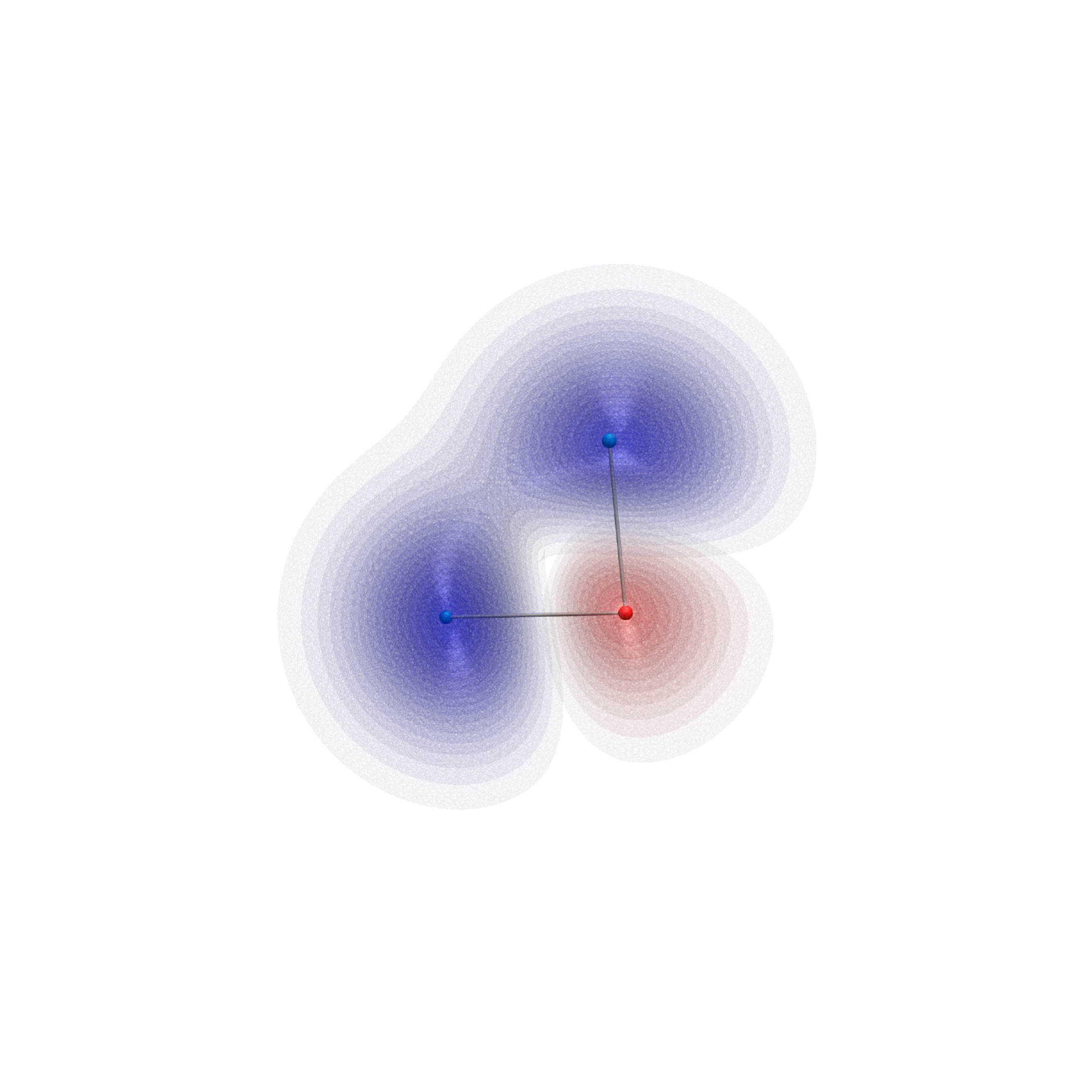}
    \caption{}
    \label{fig:psi}
  \end{subfigure}
  \caption{
    \textbf{Visualization of node feature fields in the continuous domain.}
    We depict the \emph{node feature field} $\mathbf{h}_\text{n}(\mathbf{r})$ as 3D iso-contours over the spatial domain, with each color representing a different feature channel.
    \textbf{(a)} An intermediate state of $\mathbf{h}_\text{n}$ during denoising diffusion, where node features fields are still dispersed and overlap spatially.
    \textbf{(b)} The final denoised field after model inference, where node contributions are well-separated.
    Superimposed are the reconstructed discrete graph nodes obtained by applying the inverse mapping decoding, illustrating how the continuous field encodes node positions and features, enabling recovery of discrete graph structure.
  }
  \label{fig:field-graph-visualization}
\end{figure}

\subsection{Molecular generation in field space with EDM}
\label{sec:exp-molecules-edm}

\paragraph{Fields and importance sampling.}
Following \CORDS{}, a molecule with atoms at $\{\mathbf r_u\}_{u=1}^N$ and
per-atom features $\{\boldsymbol\phi_u\}$ (atom type logits and, when used, charge)
is mapped to continuous fields on $\mathbb R^3$:
a density $\rho(\mathbf r)$ and feature channels $\mathbf h(\mathbf r)$ built by
placing normalized isotropic kernels of width $\sigma$ at each atom:
\[
\rho(\mathbf r)=\sum_{u=1}^N \kappa_\sigma(\mathbf r{-}\mathbf r_u), \qquad
\mathbf h(\mathbf r)=\sum_{u=1}^N \boldsymbol\phi_u\,\kappa_\sigma(\mathbf r{-}\mathbf r_u).
\]
At training time we \emph{do not} operate on graphs. Instead, we discretize fields
by \emph{importance sampling} $M$ query locations from a proposal $q(\mathbf r)$
proportional to $\rho(\mathbf r)$,
and read out the field values at those points:
\[
\bigl\{(\mathbf r_i,\ \rho_{\mathrm n}(\mathbf r_i),\ \mathbf h_{\mathrm n}(\mathbf r_i))\bigr\}_{i=1}^M,
\]
where "$\mathrm n$" denotes channel-wise normalization (coords, density, features).
Densities are represented either in \emph{log-space} ($\log\rho_{\mathrm n}$) or in
\emph{raw space} ($\rho_{\mathrm n}$), controlled by a flag; all learning is carried
out directly on these fields.

\paragraph{EDM preconditioning and losses.}
We train a score network in the Elucidated Diffusion Models (EDM) framework to jointly
denoise coordinates and field channels. Let $x=[\rho_{\mathrm n},\mathbf h_{\mathrm n}]$
be the stacked field features at the sampled locations and $\mathbf p$ the corresponding
coordinates. A per-molecule log-normal noise level is drawn as
$\sigma=\exp(\epsilon\,P_{\mathrm std}+P_{\mathrm mean})$, $\epsilon\sim\mathcal N(0,1)$.
We add isotropic Gaussian noise to all three channel families
\[
\mathbf p_\sigma=\mathrm{center}(\mathbf p)\ +\ \sigma\,\eta_p, \qquad
\rho^\star=\rho_{\mathrm n} + \sigma\,\eta_\rho,\qquad
\mathbf h^\star=\mathbf h_{\mathrm n} + \sigma\,\boldsymbol\eta_h,
\]
and define $x^\star=[\rho^\star,\mathbf h^\star]$. The network takes preconditioned
inputs $(\mathbf p_{\mathrm in}, x_{\mathrm in}) = (c_{\mathrm in}\mathbf p_\sigma,\ c_{\mathrm in}x^\star)$
and predicts residuals $(\Delta\mathbf p,\ \Delta x)$; EDM scalings use the \emph{geometric mean}
$\sigma_{\mathrm data}=(\sigma_c\,\sigma_r\,\sigma_f)^{1/3}$ across the three families, with
\[
c_{\mathrm skip}=\frac{\sigma_{\mathrm data}^2}{\sigma^2+\sigma_{\mathrm data}^2},\quad
c_{\mathrm out}=\frac{\sigma\,\sigma_{\mathrm data}}{\sqrt{\sigma^2+\sigma_{\mathrm data}^2}},\quad
c_{\mathrm in}=\frac{1}{\sqrt{\sigma_{\mathrm data}^2+\sigma^2}}.
\]
The denoised estimates are
$\widehat{\mathbf p}=c_{\mathrm skip}\mathbf p_\sigma + c_{\mathrm out}(\mathbf p_{\mathrm in}-\Delta\mathbf p)$ and
$\widehat{x}=c_{\mathrm skip}x^\star + c_{\mathrm out}(x_{\mathrm in}-\Delta x)$.
We minimize weighted MSEs with \emph{per-family} $\sigma_{\mathrm data}$:
\[
\mathcal L \;=\;
\lambda_{\text{coords}}\!\cdot\!\underbrace{\bigl\|w_c^{1/2}(\widehat{\mathbf p}-\mathrm{center}(\mathbf p))\bigr\|_2^2}_{\text{coordinates}}
\;+\;
\lambda_{\rho}\!\cdot\!\underbrace{\bigl\|w_r^{1/2}(\widehat{\rho}-\rho_{\mathrm n})\bigr\|_2^2}_{\text{(log-)density}}
\;+\;
\lambda_{\text{feats}}\!\cdot\!\underbrace{\bigl\|w_f^{1/2}(\widehat{\mathbf h}-\mathbf h_{\mathrm n})\bigr\|_2^2}_{\text{features}},
\]
with EDM weights
$w_c=\tfrac{\sigma^2+\sigma_c^2}{(\sigma\,\sigma_c)^2}$,
$w_r=\tfrac{\sigma^2+\sigma_r^2}{(\sigma\,\sigma_r)^2}$,
$w_f=\tfrac{\sigma^2+\sigma_f^2}{(\sigma\,\sigma_f)^2}$.
An optional \emph{mass regularizer} penalizes the squared error between the predicted
and true total density mass
$\langle \widehat N\rangle{-}\langle N\rangle$ (computed by averaging $\rho$ over the point set),
and is compatible with both log- and raw-density parameterizations.

\paragraph{Sampler (Euler--Maruyama with Karras schedule).}
At test time we draw initial Gaussian noise $(\mathbf p_0,x_0)$ and integrate the EDM
Euler--Maruyama sampler along a Karras $\sigma$-ladder $t_0\!=\!\sigma_{\max}>\cdots>t_K\!=\!0$
with optional "churn":
\[
(\mathbf p_{k{+}1}, x_{k{+}1}) \leftarrow
(\mathbf p_k,x_k) + (t_{k{+}1}-\hat t_k)\,
\frac{(\mathbf p_k{-}\widehat{\mathbf p}_{\hat t_k},\ x_k{-}\widehat x_{\hat t_k})}{\hat t_k}
\quad\text{with}\quad \hat t_k = t_k + \gamma t_k,
\]
and a Heun correction on every step. If we enable message passing ($\!>\!0$ steps), we rebuild
a radius graph on the \emph{current} coordinates after each update; translation is removed by
centering coordinates at the end. The sampler produces a set of points and denoised fields
$\{(\mathbf r_i,\widehat\rho_{\mathrm n}(\mathbf r_i),\widehat{\mathbf h}_{\mathrm n}(\mathbf r_i))\}_{i=1}^M$.

\paragraph{Decoding back to molecules (evaluation only).}
All training and sampling happen in field space. For metrics that require graphs, we apply the
same decoder as in \CORDS{}: (i) estimate $\hat N$ from the density mass, (ii) fit
atom centers by kernel matching to $\widehat\rho_{\mathrm n}$, and (iii) reconstruct per-atom
features by a weighted linear solve from $\widehat{\mathbf h}_{\mathrm n}$. When charges are
modeled, they are carried as continuous channels in $\mathbf h$ and decoded directly.

\textbf{Evaluation Metrics.}
To assess the quality of generated molecules, we report the following standard metrics:
\begin{itemize}
    \item \textbf{Validity}: the percentage of generated samples that correspond to chemically valid molecules, i.e., those that can be parsed and satisfy valence rules.
    \item \textbf{Uniqueness}: the proportion of valid molecules that are unique (non-duplicates) within the generated set.
    \item \textbf{Atom Stability}: the percentage of atoms in each molecule whose valence configuration is chemically stable.
    \item \textbf{Molecule Stability}: the percentage of molecules in which \emph{all} atoms are stable, i.e., no atom violates valence constraints.
\end{itemize}
These metrics are computed using a standardized chemistry toolkit and follow established benchmarks for QM9 generation. Together, they reflect both structural correctness and chemical diversity of the generated samples.

\paragraph{Backbone and normalization.}
The network acting on unordered field samples is an Erwin-based encoder--decoder (Fieldformer head),
receiving \emph{both} coordinates and field channels.
We apply consistent channel normalization: coordinates are scaled by a fixed factor,
features by another, and the density channel is either $\log\rho$ or $\rho$ with its own
scale. All $\sigma_{\mathrm data}$ values are estimated from the RMS of these \emph{normalized}
channels on training data; the EDM ladder $(\sigma_{\min},\sigma_{\max})$ is set to yield a
wide SNR range on coordinates.

\subsubsection*{Runtime and scalability for molecular generation}
\label{app:runtime-molecules}

\paragraph{Asymptotic costs.}
For a batch of molecules with at most $N$ atoms, spatial dimension $d$, feature
dimension $d_x$, and $M$ field samples per molecule, the CORDS encoding used
during training scales as
\[
\mathcal{O}\bigl(M N d_x\bigr)
\]
to evaluate density and feature fields at the sampled locations.
The Erwin--EDM backbone then scales linearly in $M$ (number of points), so our
overall per-step complexity is $\mathcal{O}(M N d_x)$.
Decoding at inference time follows the analysis in Appendix~\ref{app:psi-practical}:
given a decoded molecule with $\hat N$ atoms and $S$ evaluation points, the
position/feature reconstruction scales as
\[
\mathcal{O}\bigl(S \hat N + \hat N^3\bigr),
\]
where the $\hat N^3$ term comes from the small Gram solve.
In our regimes ($\hat N \le 29$ on QM9, moderate $\hat N$ on GeomDrugs), the
$\hat N^3$ term is negligible compared to the $S \hat N$ kernel evaluations;
the feature solve becomes dominant only when $\hat N$ is very large and $S$ is
small, clarifying the remark in Appendix~\ref{app:psi-practical}.

\paragraph{Measured training and inference times.}
Table~\ref{tab:cords-runtime} summarizes the measured runtimes for QM9
generation on a single NVIDIA H100 GPU (batch size 128, roughly 700k training
steps). ``Encode'' refers to CORDS field construction during training, and
``Decode'' comprises position reconstruction (GMM + L\!BFGS) and feature
recovery.

\begin{table}[h]
  \centering
  \small
  \caption{Approximate runtimes for QM9 molecular generation on a single H100 GPU.
  Times are reported per molecule.}
  \label{tab:cords-runtime}
  \vspace{3pt}
  \begin{tabular}{@{}lcc@{}}
    \toprule
    \textbf{Phase / component} & \textbf{Time [ms]} & \textbf{Description} \\\midrule
    Training: NN forward+backward & 4.32  & EDM + Erwin backbone \\
    Training: CORDS encode        & 0.014 & atoms $\rightarrow$ fields \\\midrule
    Inference: NN sampling        & 58.8  & 30 EDM steps in field space \\
    Decode: position reconstruction & 19.98 & GMM search + L\!BFGS refinement \\
    Decode: feature recovery      & 0.57  & atom-type / feature Gram solve \\\midrule
    Decode total                  & 20.8  & full CORDS decoding \\
    End-to-end total              & 79.6  & sampling + decoding \\\bottomrule
  \end{tabular}
\end{table}

\paragraph{Discussion.}
Encoding is used only during training and contributes less than $1\%$ of the
per-step wall-clock time relative to the diffusion backbone.
Decoding is used only during sampling and adds a moderate overhead: on QM9 the
CORDS decoding time is on the same order as a single pass of the generative
network, with position refinement currently dominating this cost.
Our prototype implementation does not yet fully vectorize the position fitting;
we expect a further order-of-magnitude speedup from optimizing this step in the
public release.
Overall, for small molecules (QM9) and medium-sized drug-like molecules
(GeomDrugs), the additional overhead of CORDS remains practical, while
applications to very large macromolecules would require additional engineering
and are left for future work.

\subsubsection*{Pseudocode (EDM training, sampling, decoding)}
\begin{lstlisting}[style=paperpython]
# --- Encode Phi: atoms -> fields, sample M points ---
def rasterize_and_sample(atoms):
    rho, h = make_fields(atoms, kernel=gaussian(sigma))
    r_i ~ q(r) proportional to rho(r)  # importance sampling
    dens = log(rho(r_i)) if use_log_rho else rho(r_i)
    coords = r_i / norm_coords
    feats  = h(r_i) / norm_feats
    dens   = dens / norm_rho
    return coords, stack([dens, feats], -1)   # [B,M,3], [B,M,1+C]

# --- EDM training ---
for batch in loader:
    coords, feats = rasterize_and_sample(batch)
    sigma = exp(P_mean + P_std * randn([B,1]))   # per-molecule noise
    pos_noisy  = center(coords) + sigma * randn_like(coords)
    dens_noisy = feats[..., :1] + sigma * randn_like(feats[..., :1])
    h_noisy    = feats[..., 1:] + sigma * randn_like(feats[..., 1:])
    x_noisy = concat([dens_noisy, h_noisy], -1)

    # preconditioning
    c_skip, c_in, c_out = edm_scalings(sigma, sigma_data=(sigma_c, sigma_r, sigma_f))
    pos_in, x_in = c_in * pos_noisy, c_in * x_noisy
    dpos, dx = FieldModel(pos_in, x_in, graph=radius_graph(pos_in) if mp>0 else None)
    pos_hat = c_skip * pos_noisy + c_out * (pos_in - dpos)
    x_hat   = c_skip * x_noisy   + c_out * (x_in   - dx)

    # weighted losses
    L = lambda_coords * mse_w(pos_hat, center(coords), w_c(sigma, sigma_c)) \
      + lambda_rho    * mse_w(x_hat[:,:1], feats[:,:1], w_r(sigma, sigma_r)) \
      + lambda_feats  * mse_w(x_hat[:,1:], feats[:,1:], w_f(sigma, sigma_f))
    if lambda_mass>0:
        L += lambda_mass * (mass(x_hat[:,:1]) - mass(feats[:,:1]))**2
    L.backward(); opt.step(); opt.zero_grad()

# --- Sampling (Euler-Maruyama + Karras ladder) ---
x, pos = randn([B,M,1+C]), randn([B,M,3])
pos = center(pos)
for t_cur, t_next in karras_schedule(sigma_max, sigma_min, K):
    x_hat, pos_hat = churn(x, pos, t_cur, S_churn, S_noise)
    x_d, p_d = FieldModel(c_in(t_cur)*pos_hat, c_in(t_cur)*x_hat)
    dx = (x_hat - x_d) / t_cur
    dp = (pos_hat - p_d) / t_cur
    x, pos = heun_update(x, pos, dx, dp, t_cur, t_next)
    if mp>0:
        graph = radius_graph(pos)
pos = center(pos)

# --- Decode Psi (for metrics only) ---
N_hat = integral_of_density(x[:,:1])         # count from mass
t0 = fit_kernel_centers(r=pos, rho=x[:,:1], K=N_hat)
features = gram_projection(h=x[:,1:], centers=t0)
\end{lstlisting}

And the \texttt{FieldModel} in the previous code is based on Erwin, and can be summarized as follows.
\begin{lstlisting}[style=paperpython]
# --- FieldModel / Erwin block ---
# Inputs:  pos [B,M,3], feats [B,M,1+C], sigma [B,1], batch [B*M], cond [B,K] or [B*M,K]
# Outputs: dpos [B,M,3], dfeat [B,M,1+C]

def fieldformer_step(pos, feats, sigma, *, batch, cond=None):
    # 1) sigma embedding (log-sigma scaled)
    log_sigma = log(sigma) / 4.0
    log_sigma_nodes = broadcast_to_nodes(log_sigma, batch)   # [B*M,1]

    # 2) Encode sampled points
    z_pos   = RFF(pos.view(-1, 3))                          # coords
    z_feat  = FeatMLP(feats.view(-1, 1+C))                  # fields
    z_sigma = SigmaEmbed(log_sigma_nodes)                   # sigma
    if cond is not None:
        z_cond = ConditionEmbed(broadcast_cond(cond, batch))
        z_in   = concat([z_pos, z_feat, z_sigma, z_cond], -1)
    else:
        z_in   = concat([z_pos, z_feat, z_sigma], -1)

    # 3) Fuse encodings
    h0 = Linear(z_in, out_dim=H)

    # 4) FiLM modulation by sigma
    h0 = SigmaFiLM(H)(h0, log_sigma_nodes)

    # 5) Erwin/Transformer trunk over field points
    h  = main_model(h0, node_positions=pos.view(-1,3), batch_idx=batch)

    # 6) Prediction head -> per-point residuals
    y  = PredHead(h)
    dpos, dfeat = split(y, sizes=(3, 1+C), dim=-1)

    # 7) Reshape back
    return dpos.view(B,M,3), dfeat.view(B,M,1+C)
\end{lstlisting}

\subsection{Object detection (\textsc{MultiMNIST})}
\label{sec:exp-multimnist-detailed-appendix}

\paragraph{Goal and idea.}
We compare three detectors that differ only in \emph{representation principle} while keeping capacity and engineering comparable:
(i) a \textbf{field-based (CORDS)} detector that predicts aligned continuous fields (\S\ref{sec:cords-theory}),
(ii) a \textbf{YOLO-like} anchor-free detector (stride~8), and
(iii) a \textbf{DETR-like} query-based detector.
The YOLO/DETR baselines are deliberately \emph{minimal} (no large-scale tricks or post-hoc stabilizers) so that the comparison focuses on the core ideas: density mass for counting (CORDS), cell-wise anchors (YOLO), and slot/query capacity (DETR).

\paragraph{Dataset and splits.}
We use an on-the-fly \textsc{MultiMNIST} generator (black background, no extra augmentations). Each image is \(128\times128\), with a uniformly sampled number of digits per image. Digits are randomly rotated (\(\pm25^\circ\)) and rescaled to a side length in \([18, 42]\) pixels and pasted on the canvas with a small border margin. The training range is \(N\in[1,N_{\max}]\) with \(N_{\max}{=}15\).
We report (i) in-distribution (ID) metrics on held-out images with at most \(N_{\max}\) objects, and (ii) an OOD split with exactly \(N_{\max}{+}1\) objects to probe robustness to \emph{variable cardinality}.

\paragraph{Targets and what the models predict.}
A scene is a set of discrete objects (bounding boxes of digits). Each instance is \((x,y,w,h, c)\) with center \((x,y)\in\mathbb{R}^2\), size \((w,h)\), and class \(c\in\{0,\dots,9\}\).
CORDS encodes this set into fields on the image plane:
\(\rho(\mathbf r)\) (density),
\(\rho(\mathbf r)\,\pi_k(\mathbf r)\) (per-class mass channels),
and \(\rho(\mathbf r)\,\mu(\mathbf r)\in\mathbb{R}^2\) (size mass).
The YOLO/DETR baselines predict class probabilities, boxes, and objectness/no-object in their usual forms.

\paragraph{Training objective (shared outline).}
All models are trained from RGB images (standard mean/std normalization) to their respective targets.
For CORDS we minimize a pixel-/sample-wise MSE on the field channels plus a mass-based count penalty (as in the main paper):
\[
\mathcal{L} \;=\; \mathcal{L}_{\text{MSE}}(\rho, \rho\pi, \rho\mu)\;+\;\lambda \,\bigl(\hat N - N\bigr)^2,
\qquad
\hat N \;=\; \textstyle\int \rho(x,y)\,dx\,dy .
\]
(The model learns to make \(\hat N\) an accurate, \emph{differentiable} count.)  
For YOLO we use objectness BCE, class CE (positives), and box L1\(+\)GIoU.
For DETR we use the standard Hungarian matching with class CE (with a no-object weight), L1 on boxes, and GIoU.

\paragraph{Backbones and heads (capacity parity).}
We keep capacities comparable (\(\approx\!8\)M parameters total) by adjusting base widths:
\begin{itemize}[leftmargin=1.3em]
\item \textbf{CORDS fields}: a light ConvNeXt-like FPN (stride~8 feature map; full-resolution output), head predicts \(1{+}K{+}2\) channels: \(\rho\), \(\rho\pi_{1:K}\), \(\rho\mu\) (with \(\mu\in[0,1]^2\)). We also implement a tiny UNet head; both behave similarly.
\item \textbf{YOLO-like}: a stride-8 CNN/ConvNeXt-lite backbone feeding a minimal anchor-free head that regresses cell-relative \((c_x,c_y,w,h)\) and predicts objectness and \(K\)-way class logits.
\item \textbf{DETR-like}: a small CNN/ConvNeXt-lite backbone (\(H/8\) or \(H/16\) stride) followed by a 3-layer Transformer encoder and 3-layer decoder with \(T\!=\!N_{\max}\) learned queries; heads predict \(K{+}1\) class logits (incl.\ no-object) and a normalized box via a 3-layer MLP.
\end{itemize}

\paragraph{How counting differs.}
DETR is \emph{slot-limited} (at most \(T\) outputs). YOLO is \emph{grid-limited} but flexible in count after NMS. CORDS learns \(\hat N\) from the \emph{mass} of \(\rho\), so increasing scene density produces larger \(\int\rho\) without architectural changes; decoding scales seamlessly to OOD counts.

\subsubsection*{CORDS (fields) model: forward, training, and decoding}

\paragraph{Forward prediction.}
The fields head outputs unnormalized logits which are mapped as
\(\rho=\mathrm{softplus}(\cdot)\), \(\pi=\mathrm{softmax}(\cdot)\), \(\mu=\sigma(\cdot)\), and then \(\bigl[\rho,\;\rho\pi,\;\rho\mu\bigr]\).

\paragraph{Training with Monte Carlo supervision.}
We train against \emph{sparsely sampled} field targets built on-the-fly to avoid full-image integrals. For each image we draw \(S{=}4096\) points \(\{\mathbf r_s\}\) as a mixture of uniform and importance sampling (fraction \(p_{\text{imp}}{=}0.6\)); we evaluate the analytic \(\rho,\rho\pi,\rho\mu\) at those points and regress with an MSE weighted by MC weights \(w_s\) (optionally unbiased). This is a practical implementation of the integrals used for feature inversion in \S\ref{sec:cords-theory}: uniform sampling corresponds to constant \(w_s\), while importance sampling uses \(w_s\propto 1/q(\mathbf r_s)\) with \(q\) proportional to a kernel mixture around object centers (see code).

\paragraph{Decoding at test time.}
We decode with a simple seed-and-refine routine operating on the predicted fields (no heavy post-processing). In brief: (1) compute per-class mass maps \(\rho\pi_k\); (2) infer how many seeds to extract either per-class or globally from \(\sum \rho\); (3) take local maxima (optionally subpixel refinement); (4) read \(\mu\) (size) and \(\pi\) (class) at seed locations; (5) score seeds by \(\text{conf}=\pi_k\cdot(1-\exp(-\rho_{\text{peak}}))\) and apply \(\mathrm{NMS}\).
A fixed \texttt{per\_image\_topk} cap (\(=N_{\max}\)) is used for fairness.

\paragraph{Illustrative pseudo-code (CORDS).}
\begin{lstlisting}[style=paperpython]
def fields_forward(x):
    feat = backbone_convnext_fpn(x)             # [B,C,H/8,W/8] -> FPN -> [B,C,H,W]
    logits = head_1x1(feat)                     # [B, 1+K+2, H, W]
    rho = softplus(logits[:, :1])               # density >= 0
    pi  = softmax(logits[:, 1:1+K], dim=1)      # per-class probs
    mu  = sigmoid(logits[:, 1+K:1+K+2])         # size in [0,1]^2
    return torch.cat([rho, rho*pi, rho*mu], dim=1)

def fields_decode(maps, K, H, W, alpha=1.0, nms_iou=0.5, topk=15):
    rho, cls_mass, size_mass = split(maps)      # [1], [K], [2]
    pi  = normalize(cls_mass, by=rho)           # pi = cls_mass / rho
    mu  = clamp(size_mass / rho, 0, 1)
    # how many per class? use density mass:
    Nk = round(alpha * (rho[None,:,:] * pi).sum((-2,-1)))   # [K]
    dets = []
    for k in range(K):
        seeds = topk_local_maxima((rho*pi[k]).squeeze(0), Nk[k])
        # optional subpixel refine (soft-argmax in a small window)
        wh    = bilinear_sample(mu, seeds)
        conf  = bilinear_sample(pi[k], seeds) * (1 - exp(-bilinear_sample(rho[0], seeds)))
        boxes = seeds_to_xyxy(seeds, wh, H, W)
        dets += nms_select(boxes, conf, class_id=k, iou=nms_iou)
    return prune_topk(dets, topk)
\end{lstlisting}

\subsubsection*{YOLO-like baseline (anchor-free, stride 8)}

\paragraph{Backbone/head and parameterization.}
A tiny CNN or ConvNeXt-lite backbone produces a stride-8 feature map. The head predicts for each cell: objectness, class logits over \(K\) digits, and a cell-relative box \((c_x,c_y,w,h)\) with
\[
c_x=\tfrac{g_x+\sigma(t_x)}{W_s},~ c_y=\tfrac{g_y+\sigma(t_y)}{H_s},\quad
w=\sigma(t_w)^2,~ h=\sigma(t_h)^2,
\]
where \((g_x,g_y)\) is the integer cell coordinate and \(H_s,W_s\) are stride-8 sizes. This prevents "teleporting" boxes from distant cells.

\paragraph{Assignment and loss.}
We assign each GT to its nearest cell (or the \(k\) nearest; \(k\!=\!1\) by default). Losses:
\(\mathcal{L}_{\text{obj}}=\mathrm{BCEWithLogits}\) on all cells with negative down-weight,
\(\mathcal{L}_{\text{cls}}=\) CE on positives (optional label smoothing),
\(\mathcal{L}_{\text{box}}=\) L1 on \((c_x,c_y,w,h)\) + GIoU in pixels.

\paragraph{Eval.}
Scores are \(\text{obj}\cdot \max_k p_k\). We apply NMS (class-agnostic or per-class) and cap to \texttt{per\_image\_topk}.

\paragraph{Illustrative pseudo-code (YOLO).}
\begin{lstlisting}[style=paperpython]
def yolo_forward(x):
    f = backbone_stride8(x)                     # [B,C,Hs,Ws]
    logits_cls, logits_obj, t_box = head(f)     # [B,K,Hs,Ws], [B,1,Hs,Ws], [B,4,Hs,Ws]
    cx, cy, w, h = decode_cell_relative(t_box)  # normalized to [0,1]
    return dict(cls_logits=flatten(logits_cls), obj_logits=flatten(logits_obj),
                pred_boxes=flatten(stack([cx,cy,w,h])))

def yolo_decode(out, H, W, conf_thr=0.25, nms_iou=0.4, topk=15):
    prob = softmax(out["cls_logits"], dim=-1)   # [B,N,K]
    obj  = sigmoid(out["obj_logits"])           # [B,N]
    scores, labels = prob.max(-1)               # [B,N], [B,N]
    score = obj * scores
    boxes_xyxy = cxcywh_to_xyxy_norm(out["pred_boxes"]) * [W-1,H-1,W-1,H-1]
    keep = score >= conf_thr
    dets = nms_per_class_or_agnostic(boxes_xyxy, score, labels, iou=nms_iou)
    return prune_topk(dets, topk)
\end{lstlisting}

\subsubsection*{DETR-like baseline (minimal)}

\paragraph{Backbone/transformer.}
A compact backbone produces a \(d\)-dimensional feature map, augmented with coordinate channels and sine positional encodings.
A 3-layer encoder and 3-layer decoder operate on \(T\) learned queries. We set \(T=N_{\max}\) to reflect a "budget" comparable to the other models.

\paragraph{Matching, loss, and eval.}
Hungarian assignment (SciPy) is used to match predictions to GT; if SciPy is unavailable a greedy fallback is used.
Losses: class CE with a reduced weight for the no-object class, L1 on boxes (in \((c_x,c_y,w,h)\)), and GIoU in pixels.  
At inference we compute scores as
\(\text{score} = (1-p_{\mathrm{noobj}}) \cdot \max_k p(c{=}k)\),
optionally apply NMS, and cap to \texttt{per\_image\_topk}.

\paragraph{Illustrative pseudo-code (DETR).}
\begin{lstlisting}[style=paperpython]
def detr_forward(x):
    f = backbone(x)                              # [B,C,Hs,Ws]
    pos = sine_posenc(f); coord = coord_channels(f)
    src = project(cat([f, coord])) + pos         # [B,C,Hs,Ws]
    S,B,C = (Hs*Ws), x.size(0), src.size(1)
    mem = encoder(flatten(src) + flatten(pos))   # [S,B,C]
    tgt = zeros(T,B,C); qpos = query_embed(T,B,C)
    hs  = decoder(tgt + qpos, mem + flatten(pos))# [T,B,C]
    return dict(pred_logits=class_head(hs), pred_boxes=sigmoid(box_mlp(hs)))

def detr_decode(out, H, W, conf_thr=0.4, nms_iou=0.6, topk=15):
    prob = softmax(out["pred_logits"], dim=-1)   # [..., K+1]
    p_no = prob[..., K]; p_cls, labels = prob[...,:K].max(-1)
    score = (1.0 - p_no) * p_cls
    boxes_xyxy = cxcywh_to_xyxy_norm(out["pred_boxes"]) * [W-1,H-1,W-1,H-1]
    keep = score >= conf_thr
    dets = optional_nms(boxes_xyxy, score, labels, iou=nms_iou)
    return prune_topk(dets, topk)
\end{lstlisting}

\subsubsection*{Hyperparameters and fairness guard}

\paragraph{Common.}
Image size \(128\!\times\!128\); classes \(K{=}10\); \(N_{\max}{=}15\).
Optimizer AdamW (lr \(1\!\times\!10^{-4}\), weight decay \(5\!\times\!10^{-4}\)), batch size \(128\), \(200\) epochs.
We cap detections to \texttt{per\_image\_topk}\(=15\) in \emph{all} methods and allow optional class-agnostic NMS for fairness.

\paragraph{CORDS (fields).}
Backbone: Light ConvNeXt-FPN (stride~8). Head width "base" \(=\) 64 (chosen so the total params \(\approx\) YOLO/DETR).
Density activation: \texttt{softplus} (or \texttt{softplus0}) to ensure \(\rho\ge 0\).
Sampling: \(S{=}4096\) points/image with importance fraction \(p_{\text{imp}}{=}0.6\); Gaussian kernel bandwidth \(\sigma_{\text{norm}}{=}0.02\) (fraction of min\(\{H,W\}\)).
Training loss weights: \(w_{\rho}{=}4\), \(w_{\mathrm{cls}}{=}1\), \(w_{\mathrm{size}}{=}2\).
Optional weak count supervision on a random fraction of the batch (weight \(w_{\text{count}}{=}0.5\); off by default).
At decode: \texttt{seed\_radius}\(=0.03\), \texttt{decode\_alpha}\(=1.0\), NMS IoU \(=1.0\) (disabled unless stated), and \texttt{per\_image\_topk}\(=15\).
We rescale training fields by a constant \(R\) for numerical stability (\texttt{rho\_rescale}=10.0) and undo it before decoding and metrics.

\paragraph{YOLO-like.}
Backbone: tiny CNN or ConvNeXt-lite to stride~8 feature map with \(d_{\text{model}}{=}256\).
Loss weights: \(w_{\text{obj}}{=}1\), \(w_{\text{cls}}{=}1\), \(w_{\text{box-L1}}{=}2\), \(w_{\text{GIoU}}{=}2\).
No-object down-weight \(=0.5\). Label smoothing \(=0.0\) (unless specified).
Assignment: center cell (or \(k\)-nearest cells with \(k{=}1\) by default).
Inference: confidence threshold \(0.25\); NMS IoU \(0.40\); optional class-agnostic NMS; \texttt{per\_image\_topk}\(=15\).

\paragraph{DETR-like.}
Backbone: tiny CNN or ConvNeXt-FPN; Transformer \(d{=}256\), \(n_{\text{heads}}{=}8\), \(\#\text{enc/dec layers}=3/3\), FFN~1024.
Queries \(T = N_{\max}\) unless noted.
Loss weights: class \(1.0\) (no-object coefficient \(0.5\)), \(L_1\) on boxes \(5.0\), GIoU \(2.0\).
Eval: score \(=(1-p_{\text{noobj}})\cdot \max_k p_k\); threshold \(0.40\); optional NMS with IoU \(0.60\); \texttt{per\_image\_topk}\(=15\).

\paragraph{Metrics.}
We report AP at IoU~\(0.50, 0.75, 0.90\), the mean over \([0.50{:}0.95]\) in steps of 0.05 (\(\text{mAP}_{50\text{:}95}\)), and a headline \(\text{mAP}_{50\text{:}75}\).
\emph{Count MAE} is \(|\,\#\text{preds}-\#\text{GT}\,|\) averaged over images, where \(\#\text{preds}\) is (i) \(\int \rho\) for CORDS and (ii) the number of post-NMS detections for YOLO/DETR (both capped to \texttt{per\_image\_topk} for fairness).

\paragraph{OOD protocol.}
For the OOD split we set the test cardinality to \(N{=}N_{\max}{+}1\).
Query-based models necessarily under-count when \(T{<}N\).
In contrast, the field-based model increases \(\int\rho\) naturally with scene density and decodes with the same routine, without changing the network or its capacity.

\subsection{Simulation-based inference for FRBs (implementation)}
\label{app:add_SBI}

\paragraph{Problem setting.}
We consider 1D photon-count light curves with Poisson noise generated by a
variable number of transient components (FRB bursts). Each component is
parameterized by onset time $t_0$, amplitude $A$, rise time $\tau_{\rm rise}$,
and skewness ${\rm skew}$. The latent cardinality $N$ is unknown
and changes per observation. Our goal is amortized posterior inference
$p\!\left(\{(t_{0,u},A_u,\tau_u,{\rm skew}_u)\}_{u=1}^N \mid \ell\right)$.

\paragraph{Representation.}
As in \CORDS{}, we map sets of components to continuous \emph{fields} on the
time axis $t\!\in[0,1]$:
a density $\rho(t)$ that places unit mass around each $t_0$, and a feature
field $\mathbf h(t)$ that carries the remaining parameters over the same
support. We show this in Figure \ref{fig:SBI_fields_encoding} Practically, $\rho(t)$ and $h$-channels are built by convolving
Dirac impulses with normalized Gaussians of bandwidths $\sigma_{\rho}$ and
$\sigma_{\rm feat}$, respectively, so that $\int \rho\,dt=N$ when evaluated
continuously. We discretize on a uniform grid of $T$ points (here $T{=}1000$)
and optionally downsample by average pooling to $T_{\!\rm eff}{=}T/{\tt downsample}$.

\begin{figure}[h]
  \centering
  \includegraphics[width=.65\linewidth]{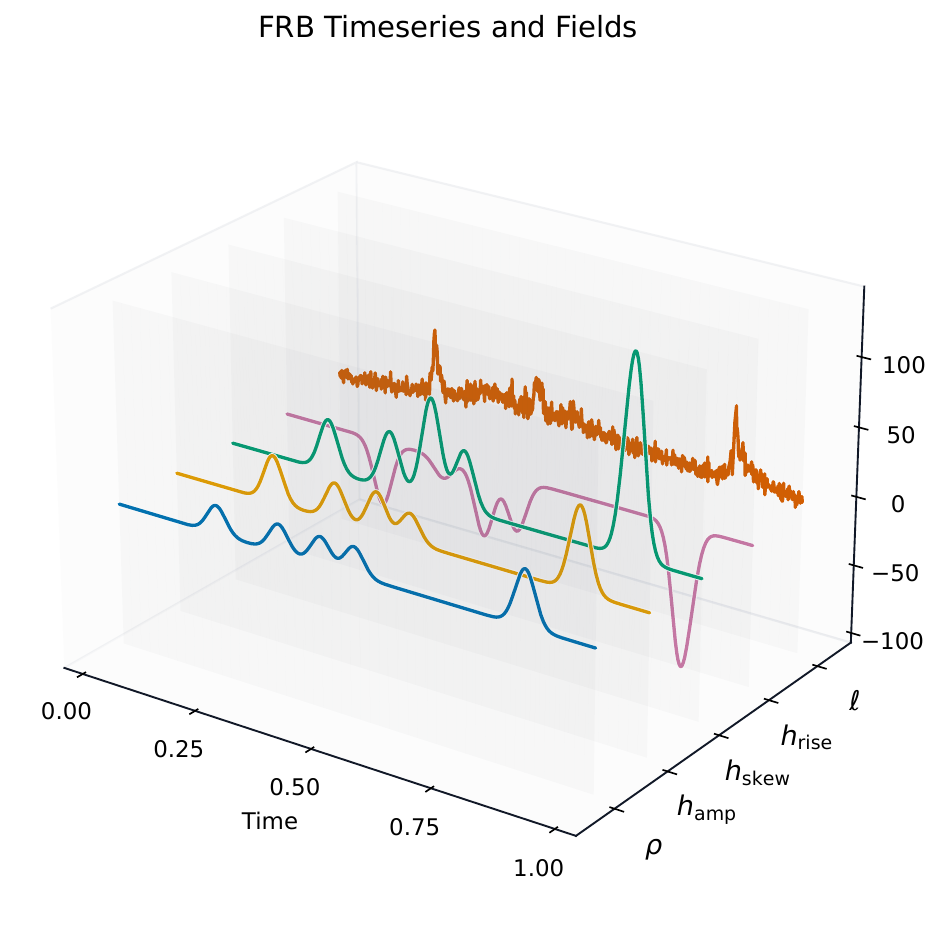}
  \caption{An example of a lightcurve $\ell$, and corresponding burst components encoded into density and feature fields in the time domain.}
  \label{fig:SBI_fields_encoding}
\end{figure}

\paragraph{Conditioning and model.}
Given a light curve $\ell\in\mathbb{R}^{T}$, we standardize it (per-sample or
using a precomputed global normalizer) and append it as a conditioning
channel. The network is a 1D conditional residual stack (\texttt{CondTemporalNet})
with grouped norms, safe dilations, optional lightweight self-attention, and a
learned sinusoidal time embedding used by all residual blocks.
The model maps \emph{fields\,+\,$\ell$} to \emph{fields} (same channel count).

\paragraph{Flow matching objective (FMPE).}
We use flow matching for SBI~\cite{FM_SBI}. For each training pair we sample
$t\!\in(0,1)$ (power-law schedule), form a noisy interpolation
$f_t=\mu_t+\sigma_t\varepsilon$ with
$\mu_t=t\,x_1$ and $\sigma_t=1-(1-\sigma_{\min})t$ (here $x_1$ are target fields and $\varepsilon\!\sim\!\mathcal{N}(0,I)$),
and supervise the time-dependent velocity field $u_t$ that transports
$f_t$ to $x_1$:
\[
u_t\;=\;\frac{x_1 - (1-\sigma_{\min})\,f_t}{1 - (1-\sigma_{\min})\,t}\,,
\qquad
\mathcal{L}_{\textsc{FMPE}} \;=\;
\bigl\|\,v_\theta(f_t,\ell,t)-u_t\,\bigr\|_2^2 .
\]
This yields an amortized posterior over fields $p(\rho,\mathbf h \mid \ell)$.
We also maintain an optional EMA of parameters for stable sampling.

\paragraph{Decoding back to components (\(\Psi\)).}
Given predicted fields on the grid:

\begin{enumerate}[leftmargin=1.35em,itemsep=2pt,topsep=2pt]
\item \textbf{Cardinality $\hat N$.} Estimate $\hat N=\mathrm{round}\!\bigl(\Delta t\sum_i\rho(t_i)\bigr)$, where $\Delta t$ is the grid spacing. Because $\rho$ comes from a smooth network and finite grids, the sum is rarely an integer; we round to the nearest integer (\(\pm\!0.5\) rule). This gives the number of kernels to fit next.

\item \textbf{Onset times $t_0$.} Extract $\hat N$ seeds by greedy NMS on $\rho$ with a minimum separation of roughly $2\sigma_\rho$ (in pixels/time-steps), refine each seed by a quadratic sub-pixel step, and finally run a short L\!BFGS that minimizes $\|\sum_u \kappa(\cdot-t_{0,u}) - \rho\|_2^2$ w.r.t.\ $\{t_{0,u}\}$, with $\kappa$ the normalized Gaussian used in $\Phi$.

\item \textbf{Features $(A,\tau_{\rm rise},{\rm skew})$.}
Solve the \emph{weighted normal equations} (Gram projection)
\(
G X = B
\) with
\(
G_{uv}=\sum_i w_i\,\kappa(t_i-t_{0,u})\,\kappa(t_i-t_{0,v}),\;
B_{u:}=\sum_i w_i\,\kappa(t_i-t_{0,u})\,\mathbf H(t_i)^\top
\),
where $\mathbf H$ stacks the feature channels of $\mathbf h$ and
$w_i$ are quadrature/MC weights. On a uniform grid, $w_i=\Delta t$
(constant); with importance sampling, $w_i=\frac{1}{S\,q(t_i)}$
are standard unbiased MC weights. We add a small Tikhonov term to $G$
and solve for $X$.

\end{enumerate}

\paragraph{Posterior use at test time.}
To estimate $p(N\mid\ell)$ we sample many fields from the learned flow,
decode each to $\hat N$, and histogram the outcomes. For predictive curves
(Fig.~\ref{fig:sbi_results}) we decode each sample to parameters, render a
Poisson rate curve via the FRB generator, and aggregate (median and quantiles).
For corner plots we restrict to samples where $\hat N$ equals the ground-truth
count, align components by nearest-onset in 1D, and visualize the empirical
posterior over aligned parameters.

\subsubsection*{Pseudocode (training, sampling, decoding)}
\begin{lstlisting}[style=paperpython]
# --- Encode Phi: set -> 1D fields ---
def encode_fields(params, time, sigma_rho, sigma_feat):
    # rho(t)=sum N(t; t0, sigma_rho); h_k(t)=sum theta_k N(t; t0, sigma_f)
    rho, H = 0, []
    for u in range(N_max):
        m = params.active_mask[u]                     # 0/1
        k_r = gauss(time - params.t0[u], sigma_rho)
        k_f = gauss(time - params.t0[u], sigma_feat)
        rho += m * k_r
        feats_u = stack([params.ampF[u], params.riseF[u], params.skew[u]])
        H.append(m * feats_u * k_f)
    H = sum(H)                                        # [3, T]
    return stack([rho, H[0], H[1], H[2]], axis=0)     # [C, T]

# --- FMPE training step ---
def fmpe_step(model, fields_tgt, lightcurve, sigma_min, alpha_t):
    B, C, T = fields_tgt.shape
    t = sample_power(alpha=alpha_t, shape=[B])         # (0,1)
    t_ = t[:,None,None]
    mu_t    = t_ * fields_tgt
    sigma_t = 1.0 - (1.0 - sigma_min) * t_
    eps     = randn_like(fields_tgt)
    f_t     = mu_t + sigma_t * eps
    u_t     = (fields_tgt - (1.0 - sigma_min) * f_t) / (1.0 - (1.0 - sigma_min) * t_)
    v_hat   = model(f_t, cond=standardize(lightcurve), time=t * (T_time_embed-1))
    return mse(v_hat, u_t)

# --- Sampling + decode Psi ---
def sample_and_decode(model, lightcurve, S, ode_steps, cfg):
    cond = standardize(lightcurve)[None,None,:]        # [1,1,T_eff]
    F = integrate_ode(model, f0=randn([S,C,T_eff]), cond=cond.repeat(S,1,1),
                      steps=ode_steps)
    Ns, t0s, Xs = [], [], []
    for s in range(S):
        rho = clamp_min(F[s,0], 0)
        N_hat = round(delta_t * rho.sum())             # mass -> count
        seeds = nms_on_1d(rho, K=N_hat, min_sep=2*sigma_rho/delta_t)
        t0_ref = lbfgs_refine(seeds, rho, sigma_rho)
        X = solve_weighted_gram(time, H=F[s,1:4], t0=t0_ref,
                                sigma_feat=sigma_feat, w=delta_t)
        Ns.append(N_hat); t0s.append(t0_ref); Xs.append(X)
    return Ns, t0s, Xs
\end{lstlisting}

\subsubsection*{Architectural details}

\paragraph{Temporal backbone (\texttt{CondTemporalNet}).}
A 1D conv stack with grouped norms and residual connections; dilations grow
geometrically but are capped by the effective signal length to keep reflect
padding valid. Optional squeeze--excite improves channel calibration.
Self-attention blocks can be inserted at chosen depths. Inputs are
$\bigl[\texttt{fields},\,\ell\bigr]$ (concatenated along channels),
outputs are residual updates in field space (same channel count).

\paragraph{Channels.}
We use $C{=}4$ channels by default: $\rho$, $h_{\rm amp}$,
$h_{\rm rise}$, $h_{\rm skew}$. The amplitude and rise channels are
logarithmic by default (base-10); an optional $h_{t_0}$ channel can be added.

\paragraph{Normalization.}
Either per-sample standardization (zero mean, unit variance per sequence) or a
global normalizer estimated over a large pool of simulated field tensors
(per-channel mean/variance). The light-curve condition is always standardized.

\subsubsection*{Hyperparameters (used in all reported FRB results)}

\begin{itemize}[leftmargin=1.35em,itemsep=2pt,topsep=2pt]
\item \textbf{Simulator.} $T{=}1000$ points, background $y_{\rm bkg}{=}5$.
$N_{\max}{=}6$ (during training, $N\sim\mathrm{Unif}\{1,\dots,N_{\max}\}$).
Priors: $t_0\!\sim\!\mathrm{Unif}(0.2,0.8)$, $\log_{10}A\!\sim\!\mathrm{Unif}(1,\,2.477)$,
$\log_{10}\tau_{\rm rise}\!\sim\!\mathrm{Unif}(-3,\,{-}0.222)$,
${\rm skew}\!\sim\!\mathrm{Unif}(1,6)$.
\item \textbf{Fields.} $\sigma_{\rho}{=}0.01$, $\sigma_{\rm feat}{=}0.015$
(on $[0,1]$); optional downsample factor $\in\{1,2,\dots\}$.
\item \textbf{Model.} Base channels $192$--$384$ (we use $192$ for the main runs),
$8$--$12$ residual blocks, dilation base $2$, group norm with 8 groups,
optional SE (reduction 8), optional attention heads $=8$ at a few blocks.
\item \textbf{FMPE.} $\sigma_{\min}{=}0.01$, $t\!\sim\!\text{power}(\alpha_t{=}0)$
(uniform), ODE steps $=250$ for sampling, init Gaussian scale $=1.0$.
\item \textbf{Optimization.} AdamW, lr $2\!\times\!10^{-4}$ with cosine decay to
$2\!\times\!10^{-6}$, batch size $128$, grad-clip at $1.0$,
EMA decay $0.999$ (activated after $1000$ steps).
\item \textbf{Posterior evaluation.} $S\in[200,1024]$ samples per observation,
mini-batches of size $32$--$64$ for the sampler.
\end{itemize}

\subsubsection*{Practical notes and diagnostics}

\paragraph{Counting and rounding.}
On a uniform grid, $\int \rho\,dt$ is approximated by $\Delta t\,\sum_i \rho(t_i)$.
Because the encoder/decoder operate on smoothed fields and finite $T$, the sum is
rarely exactly an integer. We round to the nearest integer (\(\pm0.5\) rule).
This determines \emph{how many kernels} we fit in the subsequent location/feature
steps; without it we would not know how many components to decode.

\paragraph{Feature reconstruction and weights.}
The Gram solve above is a discrete version of the integral equations in
Appendix~\ref{app:CORDS_general}. With a uniform grid the weights $w_i$ equal
$\Delta t$ (constant). Under importance sampling (not used in our FRB runs
but supported by the decoder), the same equations hold with unbiased weights
$w_i=1/(S\,q(t_i))$. This is the practical Monte-Carlo implementation of the
feature inversion integrals.

\paragraph{Peak finding and refinement.}
We use greedy 1D NMS with a minimum separation proportional to $2\sigma_\rho$
(conservative for overlapping kernels), then a one-step quadratic sub-pixel
adjustment on $\rho$, and finally a short L\!BFGS that directly minimizes the
$\rho$ reconstruction error w.r.t.\ $\{t_0\}$.

\paragraph{Evaluation protocol.}
For each observation we (i) sample posterior fields and decode to get
$p(N\!\mid\!\ell)$; (ii) produce predictive light-curve quantiles by rendering the
generator at decoded parameters; and (iii) make corner plots only for samples
with $\hat N$ matching the ground-truth $N$ (alignment by nearest $t_0$ on the
line). We save panels, $p(N)$ histograms, median fields, and CSV/NPZ dumps for
offline analysis (see code paths under \texttt{FRB\_results/}).

\paragraph{Ablations.}
We implemented a DDPM objective for sanity but report \emph{only} flow-matching
(FMPE) results in this work; switching to DDPM leaves the rest of the pipeline
unchanged.

\subsubsection*{Minimal end-to-end sketch}
\begin{lstlisting}[language=Python, basicstyle=\ttfamily\small]
# --- Training (FMPE) ---
for step in range(steps_per_epoch * epochs):
    y_counts, params, _ = simulator.sample_batch(B)
    fields = encode_fields(params, time, sigma_rho, sigma_feat)          # Phi
    fields_n = normalize(fields, mode=field_norm)
    y_n      = normalize(avg_pool(y_counts, downsample))
    loss = fmpe_step(model, fields_n, y_n, sigma_min=0.01, alpha_t=0.0)
    loss.backward(); clip_grad(1.0); opt.step(); opt.zero_grad()
    if use_ema: ema.update(model)

# --- Posterior sampling for one observation ---
with torch.no_grad():
    cond_y = normalize(avg_pool(y_obs, downsample))
    F_samples = fmpe_sample(ema_or_model, cond_y, steps=250) # integrate ODE
    Ns, t0s, Xs = [], [], []
    for F in F_samples:
        N_hat, t0_ref, X = decode_1d(F, time_eff, sigma_rho, sigma_feat) # Psi
        Ns.append(N_hat); t0s.append(t0_ref); Xs.append(X)   # p(N|ell), parameters
\end{lstlisting}

\section{Additional experiments}
\label{app:add_results}

\subsection{Predicting Variable Number of Local Maxima}\label{app:add_local_maxima}
To showcase a more general and abstract mathematical task, we consider detecting local maxima of a scalar function \( f: \mathbb{R}^2 \to \mathbb{R} \) from irregularly sampled observations. The number of peaks is unknown and varies per sample, which poses challenges for models with fixed-size outputs. A visualization of this problem can be seen in Figure \ref{fig:maxima}.

We can easily cast this problem into our framework, by treating local extremas of the function as discrete objects, with positions corresponding to peak locations. Applying the CORDS encoding, the set of local maxima is transformed into a continuous node density field. The model's task is to predict this field from irregular samples of \( f \), after which the decoding recovers both the number and coordinates of peaks. A prediction is labeled as \emph{correct} if it recovers the exact number of peaks, with each predicted peak lying within an \( \varepsilon \)-neighbourhood of its true position. Since no straightforward baselines address this specific setup, our goal is not to outperform existing methods, but to showcase how our framework naturally handles variable cardinality and infers structured information from sparse observations.

\begin{figure}[htbp]
  \centering
  \begin{subfigure}[b]{0.40\textwidth}
    \includegraphics[trim=20 20 20 20, clip, width=\linewidth]{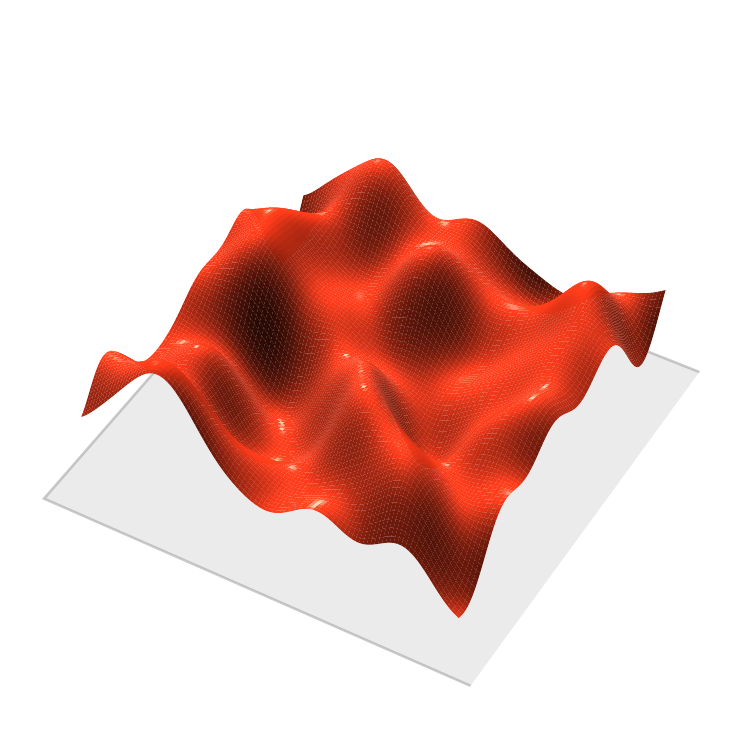}
    \caption{}
    \label{fig:maxima:a}
  \end{subfigure}
  \hspace{0.1\textwidth}
  \begin{subfigure}[b]{0.40\textwidth}
    \includegraphics[trim=20 20 20 20, clip, width=\linewidth]{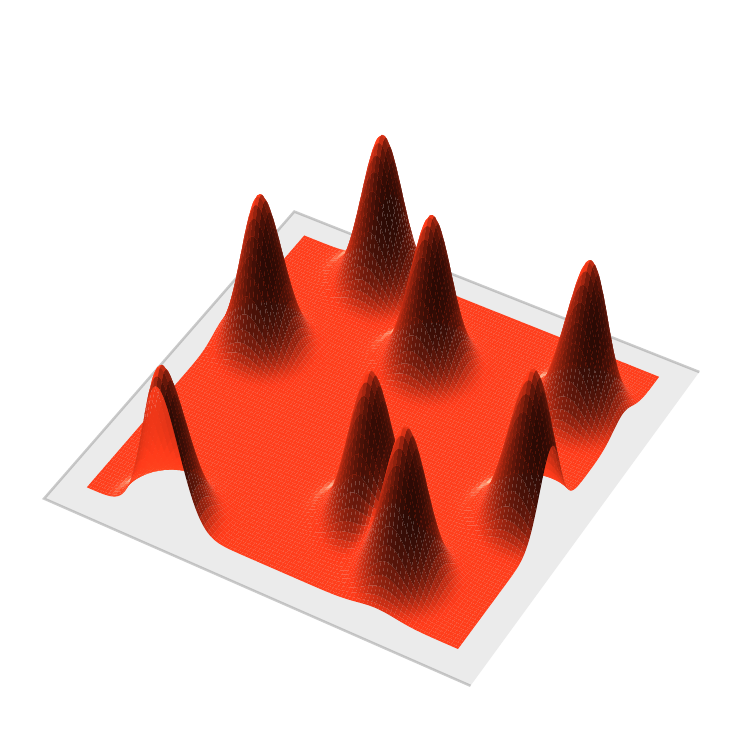}
    \caption{}
    \label{fig:maxima:b}
  \end{subfigure}
  \caption{\textbf{(a)} Visualization of a two-dimensional input function as a surface plot. \textbf{(b)} The corresponding density field \(\rho(x,y)\), obtained by interpreting the function's local extrema as discrete objects and applying the CORDS encoding.}
  \label{fig:maxima}
\end{figure}

We generate fully--annotated training examples by drawing a \emph{single} realisation
\[
f(\mathbf{r}) = \alpha \sqrt{\frac{2}{D}} \sum_{d=1}^{D} \cos(\mathbf{w}_d^\top \mathbf{r} + b_d), \quad \mathbf{r} \in [-3, 3]^2,
\]
from a Gaussian process with squared--exponential kernel
\[
k(\mathbf{r}, \mathbf{r}') = \alpha^2 \exp\left(-\|\mathbf{r} - \mathbf{r}'\|_2^2 / 2\ell^2\right)
\]
approximated by \( D = 150 \) Random Fourier Features (RFF).  
Unless stated otherwise we use amplitude \( \alpha = 1.5 \) and length-scale \( \ell = 0.9 \).

\noindent\textbf{Cosine envelope.}  
To avoid pathological peaks on the domain boundary we multiply the raw field by a separable taper  
\( E(\mathbf{r}) = e_x(x) \, e_y(y) \in [0, 1] \)  
with margin \( m = 0.8 \),  
where \( e_x(x) = \tfrac{1}{2} \left[ 1 - \cos\left(\pi \, \mathrm{clip}(|x| - (3 - m), 0, m)/m \right) \right] \)  
(and analogously for \( e_y \)).  
The envelope smoothly decays to~0 in a 0.8-wide frame,  
guaranteeing that all local maxima lie strictly inside the open square \( (-3 + m, 3 - m)^2 \).

\noindent\textbf{Ground-truth peaks.}  
We sample a \( 181 \times 181 \) grid, apply the envelope, and locate peaks with  
\texttt{peak\_local\_max}(threshold\_rel = 0.05, min\_distance = 3).  
For each example we keep at most \( M = 50 \) peaks, padding with zeros if \( K < M \).

\noindent\textbf{Training samples.}  
From the same GP realisation we draw  
\( P = 4096 \) i.i.d.\ points \( \{ \mathbf{r}_i \}_{i=1}^{P} \) uniformly from the domain.  
Their coordinates, the scalar value \( f(\mathbf{r}_i) \), and the list of peaks constitute one  
training instance.  
During optimisation we randomly subsample \( K = 2048 \) of the \( P \) points  
to form a mini-batch.

\noindent\textbf{Validation splits.}  
For robustness we evaluate on GP length-scales  
\( \ell \in \{ 0.9, 1.1 \} \), keeping the same envelope and amplitude.

\noindent\textbf{Accuracy criterion.}  
Given the set of predicted maxima \( \hat{\mathcal{P}} \) and ground-truth maxima \( \mathcal{P} \),  
we greedily assign each \( \hat{p} \in \hat{\mathcal{P}} \) to its closest  
\( p \in \mathcal{P} \).  
A sample is \emph{correct} iff \( |\hat{\mathcal{P}}| = |\mathcal{P}| \) \emph{and}  
all assigned pairs satisfy  
\[
\| \hat{p} - p \|_2 \leq \varepsilon, \quad \varepsilon = \frac{\Delta_x}{T}, \quad \Delta_x = \max_t x_t - \min_t x_t,\quad T = 25.
\]
with the default domain (\( \Delta_x = 6 \)) this yields \( \varepsilon = 0.24 \).

\textbf{Additional training details.} All models were trained on a single NVIDIA A100 GPU using a batch size of 96. Each model took approximately 5 days to train across all experiments. We employed the AdamW optimizer with a learning rate of $5 \times 10^{-5}$, coupled with a cosine annealing schedule that reduced the learning rate to a minimum of $1 \times 10^{-6}$ over the course of training.

We observe slightly higher accuracy on the longer length scale $\ell = 1.1$, which we attribute to the smoother underlying function having fewer local maxima. This setup intentionally evaluates generalisation: the model is trained on high-frequency fields ($\ell = 0.9$) and tested on both the same and smoother fields ($\ell = 1.1$) to assess robustness across varying levels of peak density.
\begin{table}[h]
\centering
\small
\vspace{2pt}
\begin{tabular}{c c}
    \toprule
    $\ell$ & Accuracy (2048 points) \\
    \midrule
    0.9 & 92.7\% \\
    1.1 & 94.2\% \\
    \bottomrule
\end{tabular}
\vspace{4pt}
\caption{\textbf{Local Maxima Prediction Accuracy.} Accuracy of local maxima detection on different GP length scales. The model was trained on $\ell = 0.9$ using 2048 sampled points per example.}
\label{tab:maxima-accuracy}
\end{table}

\subsection{QM9 Property Regression}\label{app:add-qm9-regr}
With \CORDS{}, we predict molecular properties directly from the continuous field representation, without decoding back to discrete molecular graphs. Since per-node features are not needed, predictions are obtained by pooling the final representation produced by the Erwin model. We compare \CORDS{} against representative GNN baselines (e.g., DimeNet++, SEGNN) on standard QM9 regression tasks. The aim here is not to surpass highly specialized architectures, but to show that continuous field-based representations already achieve competitive performance in this well-established domain. Results are reported in Table~\ref{tab:qm9-regression}.

\paragraph{Resampling as a strong regularizer.} Molecules are encoded to fields, and we have two options: either resample fields using importance sampling at each training step, or for each molecule, evaluate sampled fields once and save them. To evaluate the role of sampling as a form of regularization, we compared two variants of CORDS on QM9 regression: one in which spatial evaluation points are resampled at each epoch (our default), and another in which the field encoding is computed once per graph and fixed throughout training. As shown in Table~\ref{tab:fieldformer-resampling}, disabling resampling leads to a dramatic increase in MAE across all targets, more than doubling the error in most cases. This confirms that stochastic sampling during training acts as a strong regularizer, promoting generalization by exposing the model to diverse field realizations. Conceptually, this is consistent with the interpretation of the model as learning an underlying continuous function that is only ever observed through a finite sampling process. Without resampling, the model risks overfitting to a specific discretization. With resampling, however, we gain robustness to spatial variation---enabling the use of large models (100M+ parameters) even on small datasets like QM9. In Table \ref{tab:qm9-regression} we show the results on all targets compared to other baselines, with resampling at each training step.

\begin{table}[t]
\centering
\caption{QM9 regression results across different models. .}
\vspace{2pt}

\begin{tabular}{@{}l@{\hspace{8pt}}c@{\hspace{8pt}}c@{\hspace{8pt}}c@{\hspace{8pt}}c@{\hspace{8pt}}c@{\hspace{8pt}}c@{}}
\toprule
\textbf{Model} & $\alpha$ & $\Delta\varepsilon$ & $\varepsilon_\text{H}$ & $\varepsilon_\text{L}$ & $\mu$ & $C_v$ \\
\midrule
NMP         & 0.092 & 69 & 43 & 38 & 0.030 & 0.040 \\
Schnet      & 0.235 & 63 & 41 & 34 & 0.033 & 0.033 \\
Cormorant   & 0.085 & 61 & 34 & 38 & 0.038 & 0.026 \\
L1Net       & 0.088 & 68 & 46 & 45 & 0.038 & 0.030 \\
LieConv     & 0.084 & 49 & 30 & 25 & 0.032 & 0.038 \\
DimeNet++*  & 0.044 & 33 & 25 & 20 & 0.030 & 0.030 \\
TFN         & 0.223 & 58 & 40 & 38 & 0.064 & 0.104 \\
SE(3)-Tr.   & 0.142 & 53 & 35 & 33 & 0.051 & 0.054 \\
EGNN        & 0.071 & 48 & 29 & 25 & 0.029 & 0.031 \\
PaiNN       & 0.045 & 45 & 27 & 20 & 0.012 & 0.024 \\
SEGNN       & 0.060 & 42 & 24 & 21 & 0.023 & 0.031 \\
\midrule
\textbf{CORDS} & 0.085 & 50 & 32 & 30 & 0.086 & 0.039 \\
\bottomrule
\end{tabular}
\label{tab:qm9-regression}
\end{table}

\begin{table}[h]
\centering
\textbf{QM9 Regression results (CORDS only)}
\vspace{4pt}
\begin{tabular}{@{}l@{\hspace{8pt}}c@{\hspace{8pt}}c@{\hspace{8pt}}c@{\hspace{8pt}}c@{\hspace{8pt}}c@{\hspace{8pt}}c@{\hspace{8pt}}c@{}}
\toprule
\textbf{Model} & \textbf{Resample} & $\alpha$ & $\Delta\varepsilon$ & $\varepsilon_\text{H}$ & $\varepsilon_\text{L}$ & $\mu$ & $C_v$ \\
\midrule
CORDS & \checkmark & 0.085 & 50 & 32 & 30 & 0.086 & 0.039 \\
CORDS & \xmark     & 0.350 & 99 & 72 & 70 & 0.240 & 0.142 \\
\midrule
\textit{$\Delta$ (\%)} & -- & +311.8\% & +98.0\% & +125.0\% & +133.3\% & +179.1\% & +264.1\% \\
\bottomrule
\end{tabular}
\caption{Comparison of CORDS performance on QM9 regression with and without resampling of evaluation points. The third row shows the relative increase in error when disabling resampling.}
\label{tab:fieldformer-resampling}
\end{table}

\section{Extensive Related Work}\label{app:extra_related_work}

\paragraph{Neural fields and continuous representations.} Neural fields, or implicit neural representations, model data as continuous functions of coordinates. Early works like DeepSDF~\citep{park2019deepsdflearningcontinuoussigned}, NeRF~\citep{Mildenhall2020NeRF}, and SIREN~\citep{Sitzmann2020SIREN} established neural fields as flexible signal representations across 3D shapes and visual data. Building on this, Functa and COIN++~\citep{Dupont2022Functa,Dupont2022COINpp} explored generative modeling and cross-modal compression via neural fields. More recently, Generative Neural Fields~\citep{You2023GNF} and Probabilistic Diffusion Fields~\citep{diffusionprobabilisticfields} extended these ideas to scalable generative modeling of continuous signals. Equivariant Neural Fields (ENFs)~\citep{wessels2025groundingcontinuousrepresentationsgeometry} further enhance neural fields with geometry-aware latent variables, enabling steerable, equivariant representations that support fine-grained geometric reasoning and efficient weight-sharing.

\paragraph{Smoothed Fields in Computational Physics.} The formalism for continuous graph representations that we develop here bears similarities to classical methods in computational physics, such as smoothed-particle hydrodynamics (SPH) and particle-in-cell (PIC) methods~\citep{price2005smoothedparticlehydrodynamics,Rosswog_2009}. These approaches compute scalar, vector, or tensor fields by convolving particles with smoothing kernels, typically for force computation and simulation tasks.

\section*{LLM Usage}
Large language models (LLMs) were used to revise sentences and correct grammar, to generate visualization code for some figures, and to assist with the implementation of the \textsc{MultiMNIST} dataset and corresponding baseline methods. All conceptual contributions, experiment design, analysis, and the writing of original content were carried out by the authors.

\end{document}